\newtheorem{lemma}{Lemma}
\newcommand{\eg}{\emph{e.g.,}~}
\renewcommand{\algorithmiccomment}[1]{\bgroup\hfill//~#1\egroup}
\renewcommand{\thefootnote}{\fnsymbol{footnote}}
\title{Foundation Model is Efficient \\Multimodal Multitask Model Selector}
\author{
Fanqing Meng$^{1}$, Wenqi Shao$^{1*}$, Zhanglin Peng$^{1,2}$, Chonghe Jiang$^{3}$ \\ \textbf{Kaipeng Zhang$^{1}$}, \textbf{Yu Qiao}$^{1}$, \textbf{Ping Luo}$^{1, 2*}$ \\\\
$^{1}$OpenGVLab, Shanghai AI Laboratory  $^2$The University of Hong Kong  \\
$^{3}$The Chinese University of Hong Kong 
}
\begin{document}

\maketitle
{\let\thefootnote\relax\footnotetext{
\noindent \hspace{-5mm}$^*$ Corresponding Authors: shaowenqi@pjlab.orn.cn; pluo@cs.hku.edu \\
}   }
\begin{abstract}
This paper investigates an under-explored but important problem: given a collection of pre-trained neural networks, predicting their performance on each multi-modal task without fine-tuning them, such as image recognition, referring, captioning, visual question answering, and text question answering.A brute-force approach is to finetune all models on all target datasets, bringing high computational costs. Although recent-advanced approaches employed  lightweight  metrics to measure models' transferability,they often depend heavily on the prior knowledge of a single task, making them inapplicable in a multi-modal multi-task scenario. To tackle this issue, we propose an efficient multi-task model selector (EMMS), which employs large-scale foundation models to transform diverse label formats such as categories, texts, and bounding boxes of different downstream tasks into a unified noisy label embedding. EMMS can estimate a  model's transferability through a simple weighted linear regression, which can be efficiently solved by an alternating minimization algorithm with a convergence guarantee. Extensive experiments on $5$ downstream tasks with $24$ datasets show that EMMS is fast, effective, and generic enough to assess the transferability of pre-trained models, making it the first model selection method in the multi-task scenario. For instance, compared with the state-of-the-art method LogME enhanced by our label embeddings, EMMS achieves 9.0\%, 26.3\%,  20.1\%, 54.8\%, 12.2\% performance gain on image recognition, referring, captioning, visual question answering, and text question answering, while bringing 5.13$\times$, 6.29$\times$, 3.59$\times$, 6.19$\times$, and 5.66$\times$ speedup in wall-clock time, respectively. The code is available at \href{https://github.com/OpenGVLab/Multitask-Model-Selector}{https://github.com/OpenGVLab/Multitask-Model-Selector}.

\end{abstract}

\section{Introduction}

Pre-trained models (such as neural network backbones) are crucial and are capable of being fine-tuned to solve many downstream tasks such as image classification \cite{krizhevsky2017imagenet}, image captioning \cite{mao2014deep}, question answering \cite{xiong2016dynamic}, and referring segmentation \cite{yang2022lavt}. This  ``pre-training $\rightarrow$ fine-tuning'' paradigm shows that the models pre-trained on various datasets (\eg ImageNet \cite{krizhevsky2017imagenet}and YFCC100M \cite{thomee2016yfcc100m}) by many objectives (\eg supervised and self-supervised) can provide generic-purpose representation, which is transferable to different  tasks. A large number of pre-trained models have been produced with the rapid development of network  architecture research, such as  convolutional neural networks (CNNs) \cite{he2016deep,huang2017densely,szegedy2015going} and  transformers \cite{dosovitskiy2020image,liu2021swin,wang2021pyramid}. 
When given a large collection of pre-trained models to solve multiple multi-modal tasks, an open  question arises: \emph{how 
 to efficiently predict these models' performance on multiple tasks without fine-tuning them?
 }

Existing works \cite{nguyen2020leep,you2021logme,shao2022not} answered the above question using 
model selection approaches, which are of great benefit to transfer learning. For example, when a neural network is properly initialized with a better pre-trained checkpoint, it will achieve faster convergence and better performance on a target task \cite{yosinski2014transferable,he2019rethinking}. However,  it is challenging to quickly identify an optimal one  from a large collection of pre-trained models when solving each multi-modal task. 
This is because of two reasons. Firstly, the ground truth of model ranking can only be obtained by brute-force fine-tuning and  hyper-parameter grid search, which are computationally expensive \cite{zamir2018taskonomy}. Secondly, the recent methods \cite{nguyen2020leep,you2021logme,shao2022not,huang2022frustratingly} that can estimate the transferability of pre-trained models are not  generic enough for a variety of multi-modal 
 tasks. For instance, an approach \cite{shao2022not,nguyen2020leep} that relies  on the prior knowledge of a single specific task would be ineffective in others. 

To address the above challenges, we need a unified representation to represent diverse label formats in each multi-modal task \eg categories, texts and bounding boxes.
Existing methods cannot be employed in multi-task scenarios because they  only receive labels with one-hot or real-valued vectors,
as shown in Fig.\ref{fig:multi-task-selctor} . For example, LEEP \cite{nguyen2020leep} and PACTran \cite{ding2022pactran} are carefully designed for classification tasks. GBC \cite{pandy2022transferability} and TransRate \cite{huang2022frustratingly} measure transferability using class separability, which relies on prior knowledge in classification task. Although LogME \cite{you2021logme} can be used in both classification and regression tasks, it relies on real-valued labels, making it inapplicable in other label formats such as text descriptions.

\begin{figure}[t]
\centering
\includegraphics[scale=0.4]{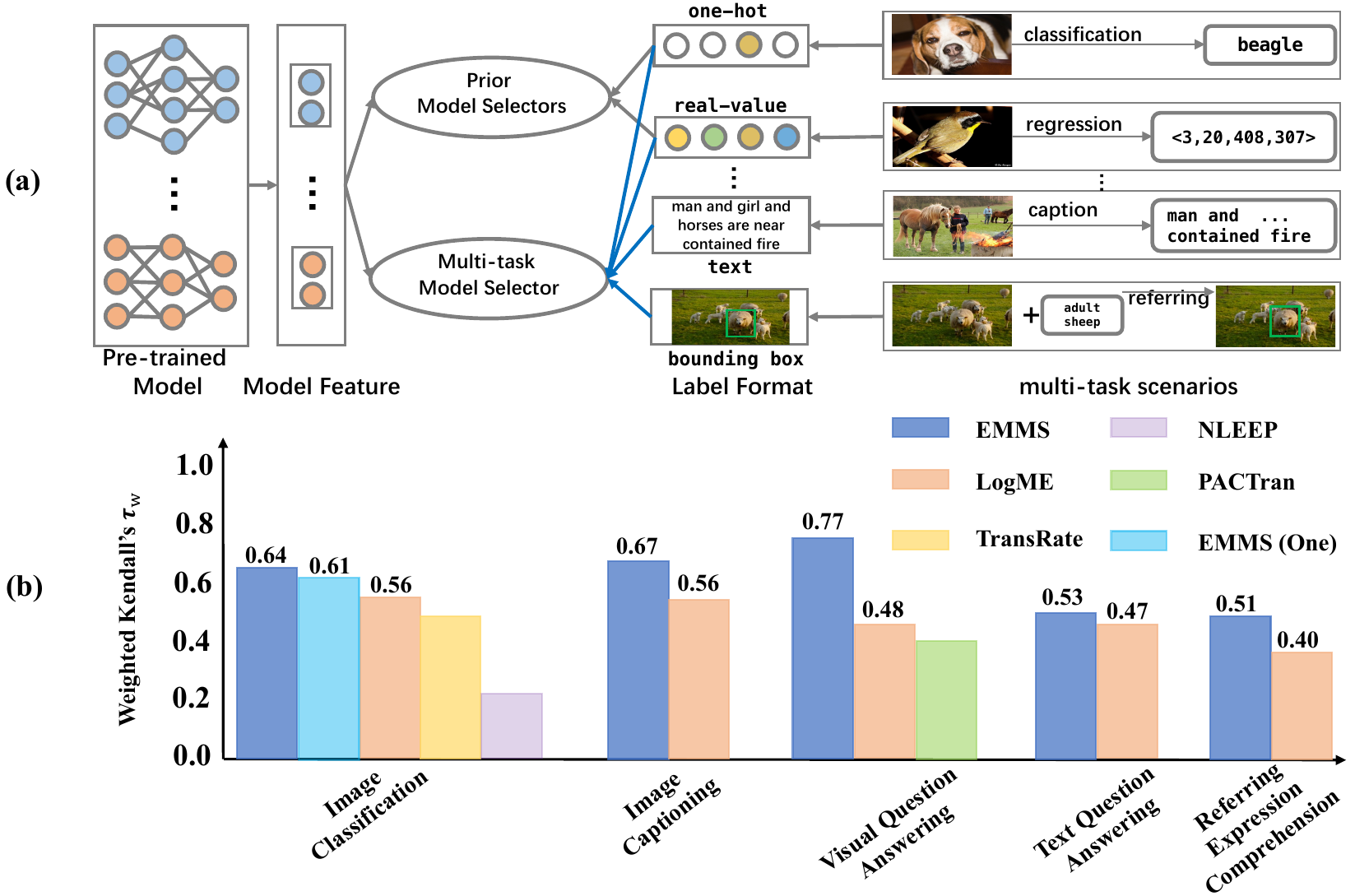}
\vspace{-0.1in}
\caption{\small{Comparison between prior pre-trained model selectors and our multi-task model selector.  (a) denotes that a model selector measures transferability by modeling the compatibility between the model feature and task label. Previous model selectors can only receive labels with one-hot or real-valued vectors. 
Our multi-task model selector can be employed in various tasks with diverse label formats. (b) denotes that our proposed EMMS is applicable and effective in various downstream tasks while previous transferability metrics can be only used in classification or regression tasks.}}
\vspace{-0.2in}
\label{fig:multi-task-selctor}
\end{figure}



In contrast to the above works, 
we propose an Efficient Multi-task Model Selector with an acronym, EMMS, which can select the most appropriate pre-trained model for solving each multi-modal task. This is achieved by 
employing foundation models, such as CLIP~\cite{radford2021learning} and GPT-2~\cite{radford2019language}, to transform diverse label formats  into a unified label embedding space. 
The estimated label embedding contains more rich information than the conventional one-hot and real-valued label encoding. 

In this way, EMMS can measure the compatibility between the models' features and corresponding label embeddings on various tasks, as shown in Fig. \ref{fig:multi-task-selctor} and Fig. \ref{fig:overview-EMMS} .
This results in a more generic assessment
of the models' transferability than previous methods. 
%
%
Specifically, EMMS  treats the estimated label embeddings as noisy oracles of the ground-truth labels, and it turns a log-likelihood maximization problem into a simple weighted linear square regression (WLSR).  We propose an alternating minimization algorithm to solve WLSR, which can be solved with a theoretical convergence guarantee efficiently. Extensive experiments validate the effectiveness of EMMS 
on multiple tasks, including image classification \cite{krizhevsky2017imagenet}, image captioning \cite{mao2014deep}, question answering on both image \cite{antol2015vqa} and text \cite{choi2018quac}, referring comprehension \cite{yang2022lavt}, and landmark detection \cite{wu2019facial}.

The \textbf{contributions} of this work are summarized as follows. (1) We propose a generic transferability estimation technique, namely Efficient Multi-task Model Selector (EMMS). Equipped with a unified label embedding provided by foundation models and a simple weighted linear square regression (WLSR), EMMS can be fast, effective, and
generic enough to assess the transferability of pre-trained models in various tasks.
(2) We propose a novel alternating minimization algorithm to solve WLSR efficiently with theoretical analysis. 
(3) Extensive experiments on $5$ downstream tasks with $24$ datasets demonstrate the effectiveness of EMMS. Specifically, EMMS achieves 9.0\%, 26.3\%,  20.1\%, 54.8\%, 12.2\%,  performance gain on image recognition, referring, captioning, visual question answering, and text question answering, while bringing 5.13$\times$, 6.29$\times$, 3.59$\times$, 6.19$\times$, and 5.66$\times$ speedup in wall-clock time compared with the state-of-the-art method
LogME enhanced by our label embeddings, respectively.

\section{Related Work}
\textbf{Transferability Estimation.} Model selection is an important task in transfer learning. To perform model selection efficiently,  methods based on designing transferability metrics have been extensively investigated. LEEP \cite{nguyen2020leep} pioneers to evaluate the transferability of source models by empirically estimating the joint distribution of pseudo-source labels and the target labels. But it can only handle classification tasks with supervised pre-trained models because the modeling of LEEP relies on the classifier of source models. Recent works propose several improvements over LEEP to overcome the limitation. For example, NLEEP \cite{li2021ranking} replaces pseudo-source labels with clustering indexes. Moreover, LogME \cite{you2021logme}, TransRate \cite{huang2022frustratingly}, and PACTran \cite{ding2022pactran} directly measure the compatibility between model features and task labels. Although fast, these metrics can only be used on limited tasks such as classification and regression. This work deals with model selection in multi-task scenarios. We propose EMMS to evaluate the transferability of pre-trained models on various tasks.

\textbf{Label Embedding.} Label embedding represents a feature vector of task labels, which can be generated in various ways. The classical approach is to use one-hot encoding to represent the labels as sparse vectors, which is widely used in image classification. Another way is to transform labels into vectors by embedding layers. For example, an RNN module is employed to generate label representation in \cite{mikolov2010recurrent}, which is encouraged to be compatible with input data vectors in text classification tasks. In addition, it is also common to treat the labels as words and use techniques such as word2vec \cite{mikolov2013efficient} or GloVe \cite{pennington2014glove} to learn vector representations of the labels. The main obstacle in the multi-task scenario is how to deal with diverse label formats. In this work, we follow the idea of word embedding and treat task labels as texts, which are then transformed into embeddings by publicly available foundation models ~\cite{radford2021learning,radford2019language}. 

\textbf{Foundation Models.} CLIP~\cite{radford2021learning} is the first known foundation model which learns good semantic matching between image and text. The text encoder of CLIP can perform zero-shot label prediction because it encodes rich text concepts of various image objects. By tokenizing multi-modal inputs into homogeneous tokens, recent work on foundation models such as OFA \cite{wang2022unifying} and Uni-Perceiver \cite{zhu2022uni} use a single encoder to learn multi-modal representations. In this work, we utilize the great capacity of foundation models in representing image-text concepts to generate label embedding. It is noteworthy that although foundation models can achieve good performance in various downstream tasks, they may not achieve good zero-shot performance on many tasks\cite{mao2022understanding} and it is still computationally expensive to transfer a large model to the target task \cite{houlsby2019parameter,gao2021clip}. On the contrary, a multi-task model selector can quickly select an optimal moderate-size pre-trained model that can generalize well in target tasks. In this sense, a multi-task model selector is complementary to foundation models. 

\section{Preliminary of Model Selection}


\noindent\textbf{Problem Setup.}
A target dataset with $N$ labeled samples denoted as $\mathcal{T}=\{(x^n,y^n)\}_{n=1}^N$ and $M$ pre-trained models $\{\phi_m=(\theta_m, h_m)\}_{m=1}^M$ are given.
Each model $\phi_m$ consists of a feature extractor $\theta_m$ producing a $D$-dimension feature (i.e. $\hat{x}=\theta_m(x)\in\mathbb{R}^D$) and a task head $h_m$ outputting predicted label given input $x$ \cite{he2016deep,dosovitskiy2020image}. In multi-task scenarios, the ground-truth label comes in various forms, such as category, caption, and bounding box, as shown in \ref{fig:multi-task-selctor} . 
The task of pre-trained model selection is to generate a score for each pre-trained model thereby the best model can be identified to achieve good performance for various downstream tasks.


\noindent\textbf{Ground Truth.} 
The ground truth is obtained by fine-tuning all pre-trained models with hyper-parameters sweep on the target training dataset and recording the highest scores of evaluation metrics \cite{li2021ranking,you2021logme} (e.g. test accuracy and BLEU4 \cite{alizadeh2003second}) . We denote ine-tuning scores of different models as $\{G_m\}_{m=1}^M$.
%
Since fine-tuning all models on all target tasks requires massive computation cost, research approaches design lightweight transferability metrics which offer an accurate estimate of how well a pre-trained model will transfer to the target tasks.
%


\noindent\textbf{Transferability Metric.}
For each pre-trained model $\phi_m$, a transferability metric outputs a scalar score $T_m$ based on the log-likelihood, as written by
\begin{equation}\label{eq:metric-score}
T_m = \sum_{n=1}^N\log p(y_n|x_n; \theta_m, h_m)
\end{equation}
where $(x_n,y_n)$ denotes the $n$-th data point in target dataset $\mathcal{T}$.
A higher log-likelihood value for $T_m$ indicates that the model $\phi_m$ is likely to achieve better performance on the intended task. 
Numerous transferability metrics have been proposed by modeling prediction probability $p(y_n|x_n; \theta_m, h_m)$ in various ways. Although being efficient, they can hardly be used in multi-task scenarios.

%
%
%

\textbf{Challenges in Multi-task Scenarios.} 
Existing transferability metrics fail to generalize to various tasks for two reasons. Firstly, existing methods such as LEEP and LogME can only deal with real-value label formats. But $y_n$ can be a sentence of words in the task of image caption. Secondly, a large number of the previous metrics estimate transferability through the target task's prior information such as maximizing inter-class separability, which is inapplicable in multi-task scenarios except for the classification. To overcome these difficulties, we introduce a simple regression framework with unified label embeddings provided by several foundation models in Sec.\ref{sec:method}.

\section{Our Method}\label{sec:method}
In this section, we introduce our Efficient Multi-task Model Selector (EMMS). To overcome the difficulty of diverse label formats, EMMS employs foundation models to transform various labels into unified label embeddings in Sec.\ref{sec:unify-label-embed}.  By treating label embeddings provided by multiple foundation models as noisy oracles of ground truth labels, EMMS can calculate transferability metric under a simple weighted linear square regression (WLSR) framework in Sec.\ref{sec:WLSR}. We design an alternating minimization algorithm to solve WLSR efficiently in Sec. \ref{sec:alg-alter-minimization}. The illustration of our EMMS is provided in Fig. \ref{fig:overview-EMMS} .

\subsection{Foundation Models Unify Label Embedding}\label{sec:unify-label-embed}

In general, label embeddings or label representations should encode the semantic information such that two labels with low semantic similarity have a low chance to be grouped. 
A common scheme is to represent $z$ as a one-hot vector. However, one-hot representation can not embed labels with text formats such as captions in the image caption task. 
Following the design in multi-modality foundation models \cite{alizadeh2003second}, we treat labels with diverse formats as a text sequence, which can be encoded by pre-trained foundation models, as shown in Fig. \ref{fig:overview-EMMS} .

\textbf{Label Embedding via Foundation Models (F-Label).} Thanks to the great representational capacity, the foundation model can construct label embedding (termed F-label) while preserving its rich semantic information of labels. Given a label $y$ in the target task, the label embedding $z\in\mathbb{R}^L$ is obtained by
$z =  {F(y)}/ {\|F(y)\|_2}$
where $F$ can be instantiated by various foundation models to process diverse label formats. $\ell_2$ normalization is utilized to normalize the representations extracted from different foundation models. Moreover $F$ can be implemented as CLIP~\cite{radford2021learning}, BERT~\cite{devlin2018bert} and GPT-2~\cite{radford2019language} when task label $y$ is text. Note that label embedding extraction can be fast enough with GPU parallel computation. We provide the runtime analysis in Appendix Sec.C.

\begin{figure}[t]
\centering
\includegraphics[scale=0.5]{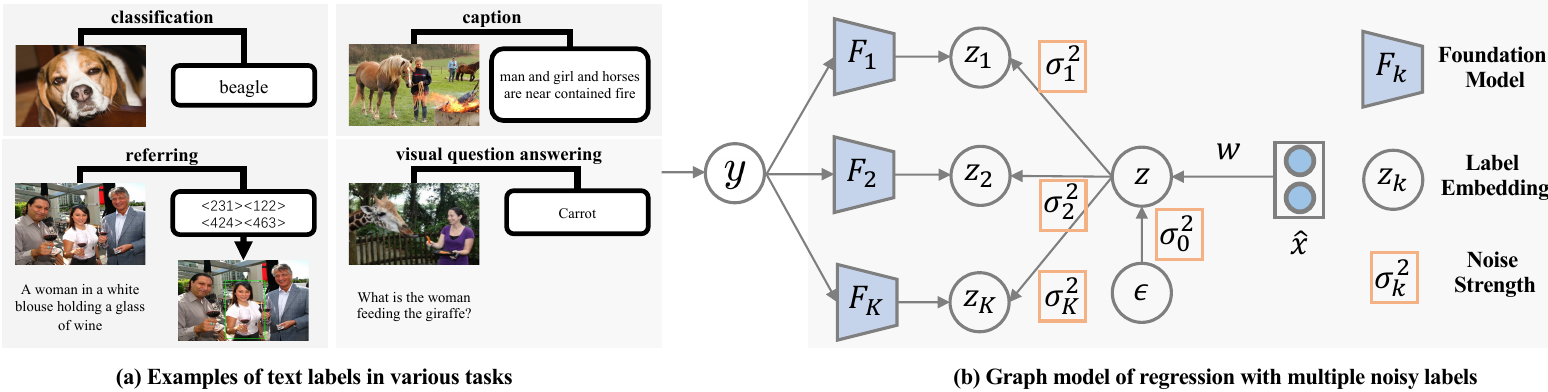}
\caption{Overview of our EMMS. (a) shows that labels in various tasks can be expressed by texts. (b) presents the graph model of regression with multiple noisy labels. We use several foundation models to encode text labels as label embeddings which are deemed as noisy oracles of true label embedding $z$. Moreover, $z$ is a linear mapping of model feature $\hat{x}$ with Gaussian noise $\epsilon\sim N(0,\sigma_0^2)$.}
\vspace{-0.2in}
\label{fig:overview-EMMS}
\end{figure}

\textbf{Benefits of F-Label.} F-Label has several advantages over one-hot label representations. Firstly, it embeds richer semantic information than one-hot label, leading to accurate modeling of the semantic relationships between different labels. As shown in Fig.\ref{fig:vis-label-embedding} , F-Label leads to a higher correlation between fine-grained classes than one-hot encoding.  Secondly, compared with one-hot labels, F-label can be obtained in a variety of tasks as long as the task label can be transformed into a text sequence. With the assistance of F-Labels, model selection can be established in multi-task scenarios.

\subsection{Regression with Unified Noisy Label Embeddings}\label{sec:WLSR}

To estimate the transferability of pre-trained models, the relationship between model features $\hat{x}\in\mathbb{R}^D$ and F-Label $z\in\mathbb{R}^L$ should be modeled in order to calculate the transferability score $T_m$ in Eqn. (\ref{eq:metric-score}). On the other hand, since a semantic label $y$ can be embedded by several foundation models, the label embedding set can be constructed as $\mathcal{Z}=\{z_k= {F_k(y)} / {\|F_k(y)\|_2, k\in[K]}\}$ where $\{F_k\}_{k=1}^K$ denotes $K$ foundation models. Now, we utilize data points $\{(\hat{x}_k^{n}, z_1^{n}, \cdots, z_K^{n})\}_{n=1}^N$ to model the relationship between model features and F-Labels.

\textbf{Setup.} As shown in Fig.\ref{fig:overview-EMMS} , we assume that true label embedding $z$ 
is a linear mapping of the model feature with additive Gaussian noise with a variance of $\sigma_0^2$, as given by $z=z_0 + \epsilon = w^T\hat{x} + \epsilon$ and $\epsilon\sim N(0,\sigma_0^2I_L)$
where $z_0= w^T\hat{x}$ is the regression prediction, $w\in\mathbb{R}^{D\times L}$ and $\epsilon$ are regression weights and regression error, respectively,  and $I_L$ is a L-by-L identity matrix. 

We assume that F-labels $\{z_k\}_{k=1}^K$ obtained from different foundation models are oracles that independently
provide noisy estimates of the true label embedding $z$. Formally, we have $P(z_k|z) = N(z,\sigma_k^2I_L)$.
%
\begin{figure}[t]
\centering
\includegraphics[scale=0.4]{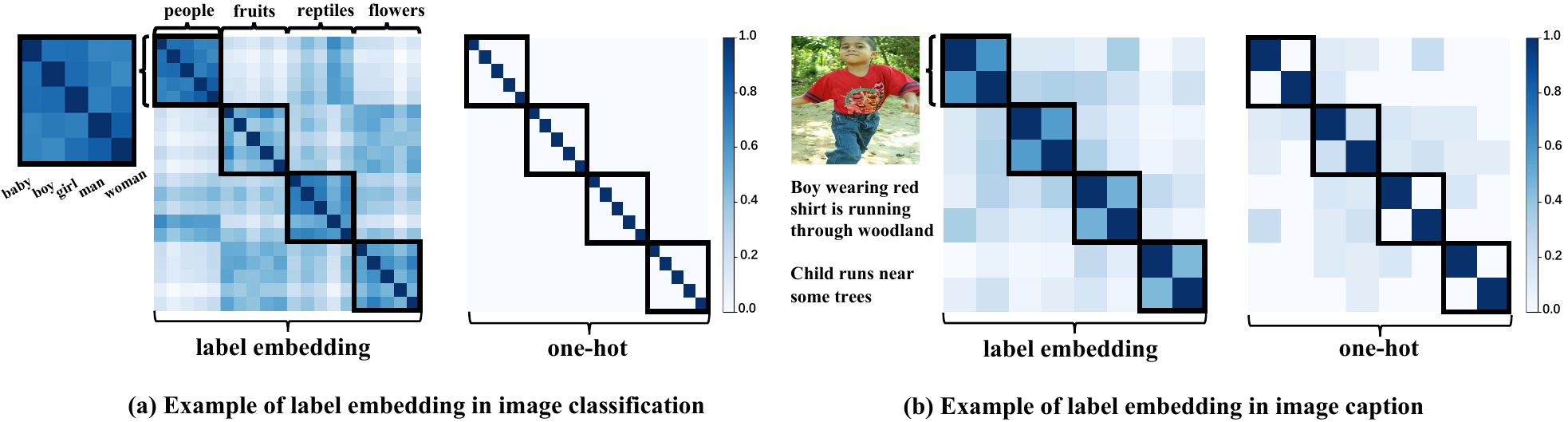}
\caption{Label embedding has richer semantic information than one-hot labels. (a) indicates that in the classification task, F-Label can capture the correlation of labels with different granularity than one-hot encoding. (b) shows that in the image caption task, F-label can model the semantic relevance of two captions corresponding to the same image better than the one-hot label. }
\vspace{-0.2in}
\label{fig:vis-label-embedding}
\end{figure}
By the above setup, EMMS would be performed with noisy labels. Hence, EMMS tends to select pre-trained models robust to the label noise.

\textbf{Reasonableness of the Linear Assumption.} Specifically, EMMS assumes that the true label embedding $z$ is a linear mapping of the model feature with Gaussian noise. The linear assumption is reasonable in image and text classification tasks because a linear classifier is usually used when the pre-trained model is transferred to a target task, which is commonly used in recent methods. For example, LogME~\cite{you2021logme} assumes that: $z \leftarrow N(w^T\hat{x},\beta^{-1})$, which implies that there is a linear mapping from the model feature space to the label space. PACTran~\cite{ding2022pactran} also has a similar setting. The difference is that LogME takes a one-hot label as the true label embedding, which limits its applicability. But our EMMS treat the true label embedding $z$ as an implicit variable. And F-Labels $\{z_k\}_{k=1}^K$ obtained from different foundation models are assumed to be noisy oracles of true label embedding $z$. Since labels in many tasks can be easily encoded into F-Lablels, our EMMS can be used as a multitask model selector. We verify effectiveness the linear assumption in various multi-model tasks with extensive experiments in Sec.\ref{sec:experiment-main}.

\textbf{Computation of Log-Likelihood.} To model the relationship between model features and F-Labels, we need to estimate regression weights $w$, strengths of label noises $\{\sigma_k\}_{k=0}^K$.
For simplicity of notation, we consider the case $L=1$, i.e. F-labels are scalars. 
Given N data points, the log-likelihood is given by 
\begin{equation}\label{eq:log-likelihood}
    \mathcal{L} = N\log A_1 - \frac{N}{2}\log A_2 + \sum_{n=1}^N(\frac{(A^n_3)^2}{4A_2}-A^n_4) + \mathrm{const}
\end{equation}
where \scalebox{0.85}{$A_1=\prod_{k=0}^K{1}/{\sigma_k}, A_2=\sum_{k=0}^K{1}/{2\sigma_k^2}, A_{3}^{n}=\sum_{k=0}^K{z_k^{n}}/{\sigma_k^2}$}, and \scalebox{0.8}{$A_{4}^{n}=\sum_{k=0}^K{(z_k^{n})^2}/{\sigma_k^2}$}. The detailed derivation of Eqn.(\ref{eq:log-likelihood}) is provided in the Appendix Sec.A.


\textbf{Maximizing Log-likelihood as Weighted Linear Square Regression (WLSR).} The remaining issue is to determine parameters $w$ and $\{\sigma_k\}_{k=0}^K$ by maximizing the log-likelihood in Eqn. (\ref{eq:log-likelihood}). But it can be intractable because $w$ and $\{\sigma_k\}_{k=0}^K$ are heavily coupled. To mitigate this issue, we turn the log-likelihood maximization into a weighted linear square regression by rearranging Eqn. (\ref{eq:log-likelihood}) as \scalebox{0.85}{$-\mathcal{L} = \frac{1}{2}\|Xw-Zt\|_2^2 + R(\{\sigma_k\}_{k=0}^K $},
where $X\in\mathbb{R}^{N\times D}$ is the data matrix whose $n$-th row is model feature $(\hat{x}^n)^T$, $w\in\mathbb{R}^{D\times 1}$ are weight parameters, $Z\in\mathbb{R}^{N\times K}$ is F-Label matrix whose $k$-th column is the label embedding $z_k$, and $t\in\mathbb{R}^{K\times 1}$ satisfies that $1_K^Tt=1, t\geq 0$ which is a $(K-1)$-D simplex denoted as $\triangle^{K-1}$. $R(\cdot)$ is a regularization term parameterized with $\{\sigma_k\}_{k=0}^K$. We provide the derivations in Appendix Sec.A. 

We note that the computational intractability comes from the data-dependent regularizer $R(\cdot)$. For efficient computation, we drop $R(\cdot)$, turning the log-likelihood maximization into a problem of WLSR, as given by
\begin{equation}\label{eq:weighted-LSR-simplified}
   \min_{w\in\mathbb{R}^{D\times 1}, t\in\triangle^{K-1}}s(w,t)=\frac{1}{2}\|Xw-Zt\|_2^2 
\end{equation}
When considering the case $L>1$, Eqn. (\ref{eq:weighted-LSR-simplified}) becomes \scalebox{0.85}{$ \min_{w\in\mathbb{R}^{D\times L}, t\in\triangle^{K-1}}\frac{1}{2}\|Xw-Zt\|_F^2$} where $Z\in\mathbb{R}^{N\times L \times K}$ and $\|\cdot\|_F$ is Frobenius norm. From Eqn. (\ref{eq:log-likelihood}) and Eqn. (\ref{eq:weighted-LSR-simplified}), $s(w,t)$ is an approximation 
\begin{minipage}[t]{0.46\textwidth}
   \begin{algorithm}[H]
      \caption{Alternating Minimization}\label{alg:alg1}
      \begin{algorithmic}[1]
        \STATE \textbf{Input:} Model feature $X \in R^{N \times D}$; F-Label matrix $Z \in R^{N \times K}$; Learning step-sizes $\eta$ and $\beta$ for $w$ and $t$, respectively;\\
        \STATE \textbf{Output:} Score of \textbf{WLSR};
        \STATE Initialize $t $ = $\frac{1}{K}1_K$ and $w = \frac{1}{D}1_D$;
        \WHILE {$s$ not converge}
        \STATE $s = \frac{1}{2}\|Xw-Zt \|_2^2$ ;
        \STATE $w \leftarrow w - \eta X^T(Xw-Zt)$;
        \WHILE {$t$ not converge}
        \STATE $t \leftarrow t - \beta Z^T(Zt-Xw)$;
        \STATE $t = \Pi_{\triangle^{K-1}}(t)$;  \COMMENT{Projection}
        \ENDWHILE 
        \ENDWHILE 
        \STATE \textbf{Return:} $s$
        \end{algorithmic}
    \end{algorithm}
\end{minipage}
\hspace{0.05\textwidth}
\begin{minipage}[t]{0.46\textwidth}
    \begin{algorithm}[H]
      \caption{Fast Alternating Minimization}\label{alg:alg1_Fast}
      \begin{algorithmic}[1]
        \STATE \textbf{Input:} Model feature $X \in R^{N \times D}$, F-Label matrix $Z \in R^{N \times K}$;
        \STATE \textbf{Output:} Score of \textbf{WLSR};
        \STATE Initialize $t$ = $\frac{1}{K}1_K$ and $w = \frac{1}{D}1_D$;
        \WHILE{$s$ not converge}
        \STATE $s = \frac{1}{2}\|Xw-Zt\|_2^2$; 
        \STATE $w = (X^TX)^{-1}X^TZt$; \COMMENT{LSR for $w$}
        \STATE $t = (Z^TZ)^{-1}Z^TXw$; \COMMENT{LSR for $t$}
        \STATE $t = \mathrm{Sparsemax}(t)$ ; \COMMENT{Projection}
        \ENDWHILE
        \STATE \textbf{Return:} $s$
        \end{algorithmic}
    \end{algorithm}
\end{minipage}
of negative log-likelihood. Hence, a smaller $s(w,t)$ indicate the larger $T_m$ in Eqn. (\ref{eq:metric-score}) and better transferability. We design an efficient algorithm to solve WLSR.


\subsection{Fast Computation by Alternating Minimization}\label{sec:alg-alter-minimization}
\noindent\textbf{Algorithm.}
The optimization problem in Eqn. (\ref{eq:weighted-LSR-simplified}) can be formulated to a classical second-order conic program\cite{alizadeh2003second}\cite{lobo1998applications}(simply called \textbf{SOCP}). However, the excessive data in our problem leads to a large dimension of the variable, making it inefficient for standard solvers. Therefore, we are motivated to find the smooth structure of the problem and design an alternating minimization algorithm to achieve fast computation. As shown in Algorithm \ref{alg:alg1} , we separately fix $w$ and $t$ to optimize the other one until the function value in Eqn. (\ref{eq:weighted-LSR-simplified}) converges. Specifically, when we fix $t$, the whole problem degenerates to a least square problem with respect to $w$. When we fix $w$, we also need to solve a least square problem concerning $t$ under the simplex constraint. 

\noindent\textbf{Convergence Analysis.}
We will prove the convergence property of the function value. Indeed, we prove a stronger condition that the function value decreases after each round of iterations on $w$ and $t$. From the monotone convergence theorem, the convergence can thus be derived. We first present the decreasing result of inner loop of $t$  by Theorem \ref{the:theorem1_main} and the same property holds for the update of $s$. Then the convergence of the whole algorithm can be derived by Theorem \ref{the:theorem2_main}. The detailed proofs are placed in the Appendix Sec.A.

\newtheorem{theorem}{Theorem}

\begin{theorem}\label{the:theorem1_main}
Suppose $s(w,t) = \frac{1}{2}\|Xw-Zt\|_F^2$ where $X\in\mathbb{R}^{N\times D}$, $Z\in\mathbb{R}^{N\times K}$, $w\in\mathbb{R}^{D\times 1}$ and $t\in\triangle^{K-1}$, the inner loop of $t$ in Algorithm~\ref{alg:alg1} lines $7$ - $10$ decreases after each iteration. Specifically, denote $\beta = 1/\|2Z^TZ\|$ and $t^{+} = \Pi_{\triangle^{K-1}}(t - \beta\nabla{s(w, t)})$. For any $t\in\triangle^{K-1}$, 
$s(w, t^{+}) - s(w,t) \leq -\frac{1}{2\beta}\|t - t^{+}\|^2 \leq 0$.
\end{theorem}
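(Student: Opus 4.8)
The plan is to recognize the inner update as a single step of projected gradient descent applied to the map $t\mapsto s(w,t)$ with $w$ held fixed, and to invoke the classical ``sufficient decrease'' lemma for such schemes. First I would fix $w$ and regard $s(w,\cdot)$ as a quadratic in $t$ alone, so that $\nabla_t s(w,t)=Z^T(Zt-Xw)$ (exactly the vector appearing inside the update on line~8) and the Hessian in $t$ is the constant matrix $Z^TZ$. Consequently $s(w,\cdot)$ is convex and its gradient is $L$-Lipschitz with $L=\|Z^TZ\|$, which is compatible with the chosen step size, since $\beta=1/\|2Z^TZ\|=1/(2L)$ and in particular $\beta\le 1/L$. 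Lines~8--9 together read $t^{+}=\Pi_{\triangle^{K-1}}\!\left(t-\beta\nabla_t s(w,t)\right)$, which is precisely the projected-gradient mapping in the statement.

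The two ingredients I would combine are the descent lemma and the variational characterization of the projection. From $L$-smoothness of $s(w,\cdot)$ I obtain, for any $t,t^{+}\in\triangle^{K-1}$,
$$s(w,t^{+}) \le s(w,t) + \langle \nabla_t s(w,t),\, t^{+}-t\rangle + \tfrac{L}{2}\|t^{+}-t\|^2 .$$
Separately, because $\triangle^{K-1}$ is closed and convex, $t^{+}$ is the unique minimizer over $\triangle^{K-1}$ of $u\mapsto\tfrac12\|u-(t-\beta\nabla_t s(w,t))\|^2$; its first-order optimality condition is the obtuse-angle inequality $\langle (t-\beta\nabla_t s(w,t))-t^{+},\, u-t^{+}\rangle\le 0$ for all $u\in\triangle^{K-1}$. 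Taking $u=t$ and rearranging yields $\langle \nabla_t s(w,t),\, t^{+}-t\rangle \le -\tfrac1\beta\|t^{+}-t\|^2$.

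Plugging this bound into the descent lemma gives $s(w,t^{+})-s(w,t)\le\left(\tfrac{L}{2}-\tfrac1\beta\right)\|t^{+}-t\|^2$, and the choice $\beta=1/(2L)$, equivalent to $L\beta=\tfrac12\le 1$ and hence to $\tfrac{L}{2}\le\tfrac1{2\beta}$, collapses the coefficient to at most $-\tfrac1{2\beta}$, delivering
$$s(w,t^{+})-s(w,t) \le -\tfrac1{2\beta}\|t-t^{+}\|^2 \le 0 ,$$
which is the claim. As stated the theorem is the scalar case; for general $L$-dimensional embeddings the same argument applies after vectorization, replacing the Euclidean norm by the Frobenius norm, because $s(w,\cdot)$ remains a convex quadratic in $t$ with a constant positive-semidefinite Hessian. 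The single step that deserves care, and the one I expect to be the main obstacle, is making the smoothness constant and the step size match honestly: one must confirm that $\|Z^TZ\|$ is the correct Lipschitz constant for $\nabla_t s$ and that $\beta=1/\|2Z^TZ\|$ indeed satisfies $\beta\le 1/L$, since it is exactly this inequality that lets the negative term $-\tfrac1\beta\|t^{+}-t\|^2$ dominate the positive curvature term $\tfrac{L}{2}\|t^{+}-t\|^2$.
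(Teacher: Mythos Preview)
Your proposal is correct and follows essentially the same route as the paper: fix $w$, observe that $t\mapsto s(w,t)$ is a convex quadratic with Lipschitz gradient constant $\|Z^TZ\|$, combine the descent lemma with the projection's obtuse-angle inequality, and conclude the sufficient-decrease bound. The paper packages the combination into a separate three-point lemma (their Lemma~3, bounding $f(x^+)-f(y)$ for arbitrary $y$ in the constraint set) and then specializes to $y=t$, whereas you go straight to the target inequality; your direct argument is slightly cleaner and, incidentally, avoids the paper's spurious factor of $2$ in the gradient computation.
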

\begin{theorem}\label{the:theorem2_main}
Suppose $s(w,t) = \frac{1}{2}\|Xw-Zt\|_2^2$ where $X\in\mathbb{R}^{N\times D}$, $Z\in\mathbb{R}^{N\times K}$, $w\in\mathbb{R}^{D\times 1}$ and $t\in\triangle^{K-1}$, the function value in Algorithm~\ref{alg:alg1} will be convergent. Specifically, denote $w^{\star}, t^{\star}$ as the result after one iteration of $w,t$ respectively, we have $0 \leq s(w^{\star},t^{\star}) \leq s(w^{\star},t) \leq s(w,t) $.
\end{theorem}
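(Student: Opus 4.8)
The plan is to verify the displayed chain of inequalities link by link over a single outer iteration and then promote it to convergence via the monotone convergence theorem. The leftmost inequality is immediate: $s(w,t)=\tfrac12\|Xw-Zt\|_2^2$ is half a squared norm, hence $s\ge 0$ everywhere. What remains is to show that neither of the two sub-steps of an outer iteration increases the objective, namely that the gradient step on $w$ (line~6) gives $s(w^{\star},t)\le s(w,t)$ and that the inner projected-gradient loop on $t$ (lines~7--10) gives $s(w^{\star},t^{\star})\le s(w^{\star},t)$. The latter is already delivered by Theorem~\ref{the:theorem1_main}, so the only genuinely new work is the $w$-step.

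For the $w$-step I would fix $t$ and regard $s(\cdot,t)$ as a convex quadratic in $w$ with gradient $\nabla_w s=X^T(Xw-Zt)$ and constant Hessian $X^TX$, so its gradient is $\|X^TX\|$-Lipschitz. Writing the residual $r=Xw-Zt$ and substituting the update $w^{\star}=w-\eta X^T r$ from line~6, I would expand
\begin{equation*}
s(w^{\star},t)-s(w,t)=-\eta\|X^T r\|^2+\tfrac{\eta^2}{2}\|XX^T r\|^2 .
\end{equation*}
The one nontrivial estimate is the spectral bound $\|XX^T r\|^2=(X^T r)^T(X^TX)(X^T r)\le\|X^TX\|\,\|X^T r\|^2$, which turns the right-hand side into $\eta\bigl(\tfrac{\eta}{2}\|X^TX\|-1\bigr)\|X^T r\|^2$. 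This is $\le 0$ as soon as $\eta\le 2/\|X^TX\|$; in particular the step size $\eta=1/\|2X^TX\|$, chosen in exact analogy with the $\beta$ of Theorem~\ref{the:theorem1_main}, is admissible, giving $s(w^{\star},t)\le s(w,t)$.

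Invoking Theorem~\ref{the:theorem1_main} with the fixed weight $w^{\star}$ handles the middle link: each inner iteration obeys $s(w^{\star},t^{+})-s(w^{\star},t)\le-\tfrac{1}{2\beta}\|t-t^{+}\|^2\le 0$, so running the loop to convergence yields $s(w^{\star},t^{\star})\le s(w^{\star},t)$. Concatenating the three facts gives $0\le s(w^{\star},t^{\star})\le s(w^{\star},t)\le s(w,t)$. Applying this chain at every outer iteration shows the sequence of objective values is non-increasing and bounded below by $0$, so by the monotone convergence theorem it converges, which is the assertion of the theorem.

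I expect the $w$-step to be the main obstacle: because line~6 performs a single gradient step rather than an exact block minimization, descent is not automatic and rests entirely on the step-size condition, just as the factor of two inside $\beta$ does for the $t$-update in Theorem~\ref{the:theorem1_main}; the spectral-norm inequality is where all the work lives. I would also stress that the theorem asks only for convergence of the \emph{function value}, which the monotone argument supplies directly and without any strong-convexity hypothesis; this deliberately sidesteps the harder question of whether the iterates $(w,t)$ themselves converge, since $s$ need not have a unique minimizer over $\mathbb{R}^{D}\times\triangle^{K-1}$.
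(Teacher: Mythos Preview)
Your proposal is correct and follows essentially the same route as the paper: show non-increase of $s$ at each sub-step (the $w$-gradient step and the inner $t$-loop via Theorem~\ref{the:theorem1_main}), combine with the trivial lower bound $s\ge 0$, and invoke monotone convergence. The only cosmetic difference is that the paper appeals to the abstract smoothness descent lemma for the $w$-step, whereas you expand the quadratic directly and use the spectral bound; since $s(\cdot,t)$ is exactly quadratic these are the same computation, and your explicit step-size condition $\eta\le 2/\|X^TX\|$ makes precise what the paper leaves implicit.
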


\noindent\textbf{Computational Speedup.}
Although this algorithm~\ref{alg:alg1} guarantees convergence, it is a bit time-consuming due to the two-level loop, we optimized this part and achieved similar results in very little time.
%
%
Since the least squares solution is extremely fast, we performs least squares on $w$ and $t$, and then replace projection onto simplex with explicit Sparsemax transformation~\cite{martins2016softmax}, iteratively. The fast solver is illustrated in Algorithm \ref{alg:alg1_Fast} . we experimentally verify its convergence and find that the approach achieves impressive speedup.

\section{Experiment}\label{sec:experiment-main}
This section evaluates our method EMMS on different downstream tasks, 
including image classification, image caption, visual question answering, text question answering and referring expression comprehension. We put more experiments details in Appendix Sec.B. Moreover, we conduct a detailed ablation study to analyze our EMMS in Appendix Sec.C
%


\subsection{Training Details}

\noindent\textbf{Benchmark.} 
For \textbf{image classification}, We adopt 11 classification benchmarks , including FGVC Aircraft~\cite{maji2013fine}, Caltech-101~\cite{fei2004learning}, Stanford Cars~\cite{krause2013collecting}, CIFAR-10~\cite{krizhevsky2009learning}, CIFAR-100~\cite{krizhevsky2009learning}, DTD~\cite{cimpoi2014describing}, Oxford 102 Flowers~\cite{nilsback2008automated}, Food-101~\cite{bossard2014food}, Oxford-IIIT Pets~\cite{he20162016}, SUN397 ~\cite{xiao2010sun}, and VOC2007~\cite{everingham2010pascal}. For \textbf{image caption}, We use Flickr8k~\cite{rashtchian2010collecting}, Flickr30k~\cite{plummer2015flickr30k}, FlickrStyle10K-Humor~\cite{gan2017stylenet}, FlickrStyle10K-Romantic~\cite{gan2017stylenet} and RSICD~\cite{lu2017exploring}. For \textbf{visual question answer}, We apply COCOQA~\cite{ren2015exploring}, DAQUAR~\cite{malinowski2014multi} and CLEVR~\cite{johnson2017clevr}. For \textbf{text question answer} and \textbf{referring expression comprehension}, we separately use SQuAD1.1 \cite{rajpurkar2016squad} ,SQuAD2.0 \cite{rajpurkar2018know} and RefCOCO \cite{yu2016modeling}, RefCOCO+ \cite{yu2016modeling}, RefCOCOg \cite{mao2016generation} .

\noindent\textbf{Ground truth.} 
In order to obtain the ground truth, we finetune all pre-trained models on all target datasets with a grid search of hyper-parameters.
 Details of target datasets and fine-tuning schemes are described in Appendix Sec.B.

\noindent\textbf{Evaluation protocol.} 
To assess how well a model selector predict the transferability of pre-trained models, we calculate the rank correlation between $\{T_m\}_{m=1}^M$ and $\{G_m\}_{m=1}^M$.
%
Following the common practice \cite{you2021logme,li2021ranking}, we use \textit{weighted Kendall's $\tau_w$}. The larger $\tau_w$ indicates a better correlation and better transferability metric. For computation complexity, we record the runtime of executing algorithm over all models given the feature and label on a target task and analyzed the computational complexity of EMMS as well as LogME. (Details can be found in Appendix Sec.C)

\noindent\textbf{Baseline.} 
For the image classification task, we choose NLEEP \cite{li2021ranking}, TransRate \cite{huang2022frustratingly}, and LogME \cite{you2021logme}as the baseline; for other multimodal tasks, we choose LogME with F-Label as the baseline; in addition, for the VQA task, we additionally compare PACTran \cite{ding2022pactran}. Details of baselines and why we choose them are described in Appendix Sec.B.

\subsection{Image Classification with CNN Models}
We first evaluate the performance of transferability metrics on ranking pre-trained supervised CNN models, which is one of the most commonly used of the model selection tasks. We use the latest model selection methods as baseline for comparison. Details of pre-trained models are described in Appendix Sec.B.

\noindent\textbf{Performance and wall-clock time comparison. }
We compare EMMS with previous LEEP, NLEEP, LogME, and TransRate. As shown in Table.\ref{tab:sup-tw-cls}, our EMMS achieve the best average $\tau_w$ on 11 target datasets and the best $\tau_w$ on 6 target datasets. Compared to NLEEP, which is the most effective other than EMMS, we have almost 1/40 of the time of NLEEP.

\begin{table}[t]
\begin{center}
\caption{Comparison of different transferability metrics on CNN models regarding $\tau_w$ and the wall-clock time where EMMS(One) denotes EMMS with the one-hot label. Our proposed EMMS achieves the best transfer-ability assessment over 11 target tasks and exhibits higher efficiency than NLEEP. }
\label{tab:sup-tw-cls}
\scalebox{0.75}{
\begin{tabular}{c c c c c c c c c c c c c}
\hline\noalign{\smallskip}
 & Aircraft & Caltech & Cars & CF-10 & CF-100 & DTD & Flowers & Food & Pets & SUN & VOC & Avg.\\
\noalign{\smallskip}

\hline
\noalign{\smallskip}
\multicolumn{12}{c}{Weighted Kendall's tau $\tau_w$} \\
\noalign{\smallskip}
\hline

LEEP  & -0.234& 0.605& 0.367& 0.824& 0.677& 0.486& -0.243& 0.491& 0.389& 0.722& 0.371& 0.409\\
LogME  & 0.506 & 0.435 & \textbf{0.576} & 0.852& 0.677& 0.647& 0.111& 0.385& 0.411& 0.487& 0.669& 0.509\\
NLEEP  & -0.41& \textbf{0.614}& 0.265 & 0.818 & 0.805& \textbf{0.796}& 0.122& 0.214 & \textbf{0.753}& \textbf{0.925}&0.687& 0.611  \\
TransRate  & 0.172& 0.269& 0.172 & 0.513 & 0.197& 0.336& -0.176& -0.071 & 0.173& 0.612& 0.651 & 0.236  \\
\underline{EMMS(One)}  & 0.481 & 0.546 & 0.304 & 0.963& 0.804& 0.701& 0.498& 0.588& 0.574& 0.638& 0.707& 0.618\\
\underline{EMMS}  & \textbf{0.556} & 0.562 & 0.565& \textbf{0.963} & \textbf{0.840}& {0.720}& \textbf{0.498}& \textbf{0.608}& 0.604& {0.667}& \textbf{0.735}& \textbf{0.664} \\
\hline
\noalign{\smallskip}
\multicolumn{12}{c}{Wall-Clock Time (s)} \\
\noalign{\smallskip}
\hline
LEEP  & 5.1& 4.9& 8.3& 22.3& 23.8& 3.5& 3.8 & 37.1 & 3.9& 21.1& 4.8& 10.4\\
LogME  & 30.36 & 31.24 & 56.26 & 90.34& 188.3& 15.16& 22.27& 334.53& 17.55& 180.01& 20.05& 289.64\\
NLEEP  & 253.8& 488.7& 973.8 & 1.1e4 & 1.7e4 & 146.0 & 294.0& 2.0e4 & 580.8 & 8.6e3 & 678.8& 5455.9  \\
TransRate  & 147.90& 163.41& 300.29 & 65.25 & 193.64 & 75.48 & 166.24& 195.92 & 60.53 & 430.33 & 18.72& 165.24  \\
\underline{EMMS(One)}  & 17.43 & 20.53 & 35.22 & 70.01& 78.24& 12.75& 18.04& 116.23& 15.04& 70.98& 18.42& 42.99\\
\underline{EMMS}  & 65.85 & 63.49 & 79.79& 245.49 & 295.37& 46.38& 63.52& 417.80& 59.64& 173.59& 64.60& 143.2 \\
\hline
\end{tabular} }

\end{center}
\end{table}

\subsection{Image Classification with ViT Models}
Vision transformer~\cite{dosovitskiy2020image} (ViT) models have been increasingly used for a variety of tasks and have achieved better results than CNN models. The architecture of ViT models are more complex than CNN models. Hence, how to do the model selection on ViT models is a more challenging and rewarding task. Details of pre-trained models are described in Appendix Sec.B.




\noindent\textbf{Performance and wall-clock time comparison. }
As shown in Table.\ref{tab:vit-tw} , our EMMS achieve the best average $\tau_w$ on 11 target datasets and the best $\tau_w$ on 9 target datasets with relatively short time. For example, EMMS outperforms LogME by 0.182 and 0.139 rank correlation $\tau_w$ on Aircraft, and VOC2007, respectively, showing the effectiveness of our EMMS in measuring the transfer-ability of pre-trained ViT models. On the other hand, for the remaining 2 target datasets (i.e. CF-10, DTD), our EMMS still has a marginal gap compared to the best-performing transferability metric. Besides, we find that the effect of model selection of EMMS in ViT models selection has an improvement compared to CNN models selection, we guess F-Label has spatial similarity with the model feature of ViT-base model because the foundation models are mostly transformer-based, which can model the relationship between model feature from Vit-base models and F-Labels more accurately.


\begingroup
\renewcommand{\arraystretch}{1.1}
\setlength{\tabcolsep}{4pt}
\begin{table}[t]
\begin{center}
\caption{Comparison of different transferability metrics on ViT models regarding $\tau_w$ and the wall-clock time where EMMS(One) denotes EMMS with the one-hot label. Our proposed EMMS achieves the best transfer-ability assessment over 11 target tasks and exhibits higher efficiency than NLEEP. }
\label{tab:vit-tw}

\scalebox{0.8}{
\begin{tabular}{c c c c c c c c c c c c c}
\hline\noalign{\smallskip}
 & Aircraft & Caltech & Cars & CF-10 & CF-100 & DTD & Flowers & Food & Pets & SUN & VOC& Avg.\\
\noalign{\smallskip}
\hline
\noalign{\smallskip}
&\multicolumn{11}{c}{Weighted Kendall's tau $\tau_w$} \\
\noalign{\smallskip}
\hline
LogME  & 0.299 & 0.382 & 0.633 & \textbf{0.741}& 0.727& 0.569& 0.512& 0.580& 0.528& 0.619& 0.591& 0.561\\
NLEEP  & -0.282& 0.027 & 0.693 & 0.674 & 0.538& 0.123& -0.262& 0.105 & 0.40& 0.268& 0.109& 0.218  \\
TransRate  & 0.244& 0.412 & 0.487 & 0.260 & 0.702& 0.533& \textbf{0.655}& 0.542 & 0.707& 0.612& 0.651& 0.527  \\
\underline{EMMS(One)}   & 0.412& 0.444 & 0.565 & 0.740 & 0.736& 0.621& 0.562& 0.579 & 0.740& 0.592& 0.730& 0.611  \\
\underline{EMMS}  & \textbf{0.481} & \textbf{0.444} & \textbf{0.706}& 0.718 & \textbf{0.745}& \textbf{0.621}& 0.562& \textbf{0.673}& \textbf{0.740}& \textbf{0.619}& \textbf{0.730}& \textbf{0.639} \\
\hline
\noalign{\smallskip}
&\multicolumn{11}{c}{Wall-Clock Time (s)} \\
\noalign{\smallskip}
\hline
LogME  & 8.93 & 10.89 & 30.28 & 53.07& 62.13& 4.78& 9.27 & 104.92 & 6.28& 425.43 & 7.42 &65.76\\
NLEEP  & 553.7& 716.8& 1.1e3 & 8.0e3 & 1.2e4 & 183.7 & 819.2& 3.4e4 & 256.4 & 2.7e4 & 288.3 & 7719.8  \\
TransRate  & 19.43 & 19.21 & 36.9& 61.73 & 63.82& 8.73& 18.26& 110.79& 15.51& 89.92& 5.11& 40.85 \\
\underline{EMMS(One)}  & \textbf{4.12} & \textbf{4.45} & \textbf{8.07}& \textbf{19.45} & \textbf{26.18}& \textbf{2.65}& \textbf{4.03}& \textbf{39.72}& \textbf{3.50}& \textbf{24.84}& \textbf{4.07}& \textbf{12.82} \\
\underline{EMMS}  & 21.31 & 17.23 & 28.06& 154.61 & 182.11& 13.87& 15.95& 265.99 & 17.93& 63.86 & 16.63& 72.55\\
\hline
\end{tabular}
}
\vspace{-0.2in}
\end{center}
\end{table}
\endgroup

\subsection{Image Captioning}
Here we treat image caption as a vocab-based classification task. That is we use a vocabulary and classify the caption into the index of some words in the vocabulary. Afterward, training is done according to the classification task criteria .Here we calculate the average $\tau_w$ and time of LogME with $K$ single F-label from $K$ foundation models we use respectively. We wants to select the best combination of image encoder and language encoder. Details of pre-trained models and the model architecture are described in Appendix Sec.C.

\noindent\textbf{Performance and wall-clock time comparison. }
As shown in Table.\ref{tab:caption-tw},  EMMS is significantly ahead of baseline in both time and effect for each dataset. For example, EMMS outperforms LogME with the relative improvements of 39\% and 37\%  in rank correlation $\tau_w$ on Flickr8k and Flickr30k, respectively. In addition, the time of EMMS is reduced by 83.7\% and 79.8\% relative to LogME on these two datasets, which shows the efficiency of our algorithm. The average rank correlation $\tau_w$ alone the five datasets is 0.64, which denotes EMMS has sufficient confidence. 

\begingroup
\renewcommand{\arraystretch}{1.1}
\setlength{\tabcolsep}{4pt}
\begin{table}[t]
\begin{center}
\caption{Comparison of different transferability metrics on image caption models in rank correlation $\tau_w$ with the ground truth and the wall-clock time. The LogME denotes using LogME with F-Label. Our proposed EMMS achieves the best transfer-ability assessment on each target task with much less time compared to LogME. }
\label{tab:caption-tw}
\scalebox{0.9}{
\begin{tabular}{c c c c c c| c c c c c }
\hline\noalign{\smallskip}
 & F8k & F30k & RSD & F10k-H & F10k-R  & F8k & F30k & RSD & F10k-H & F10k-R \\
\noalign{\smallskip}
\hline
\noalign{\smallskip}
& \multicolumn{5}{c|}{Weighted Kendall's tau $\tau_w$}  &   \multicolumn{5}{c}{Wall-Clock Time (s)} \\
\noalign{\smallskip}
\hline
LogME  & 0.483 & 0.368 & 0.501 & 0.780& 0.654 & 425.67 & 1594.16 & 973.22 & 60.35& 63.79\\
\underline{EMMS}  & \textbf{0.660} & \textbf{0.504} & \textbf{0.704}& \textbf{0.802} & \textbf{0.678} & \textbf{69.01} & \textbf{321.32} & \textbf{88.77}& \textbf{16.56} & \textbf{14.59}\\
\hline
\end{tabular} }

\end{center}
\end{table}
\endgroup

\subsection{Visual Question Answering}

To further demonstrate the generality of EMMS in multi-model tasks, we show how EMMS can work for VQA. We follow previous practice (~\cite{ding2022pactran}) which treats VQA as a classification task (vocab-based VQA). That is, we construct a vocabulary based on the top answers in the training sets and classify them into some of those labels. The models to be selected and the architecture is the same as in the image captioning .

\noindent\textbf{Performance and wall-clock time comparison. }
As shown in Table.\ref{tab:sup-tw-vqa}, EMMS is clearly ahead of PACTran in terms of results and time, proving that EMMS has the ability to handle multi-modal tasks very well. We can find that EMMS outperforms PACTran on all datasets. In particular, EMMS achieves 93.8\% and 93.7\% gain over PACTran on the COCO-QA and CLEVR datasets with rank correlation $\tau_w$ while reducing time consumption by 75.1\% and 34.3\% respectively compared to Pactran. This indicates that EMMS performs well on both ordinary VQA datasets(DAQUAR, COCO-QA) as well as VQA datasets(CLEVR) that focus on inference capabilities.

\begin{table}[t]
\begin{center}
\caption{Comparison of different transferability metrics on VQA models in rank correlation $\tau_w$ with the ground truth and the wall-clock time. The LogME denotes using LogME with F-Label. Our proposed EMMS performs better than PACTran head over 3 target tasks with much less time. }
\label{tab:sup-tw-vqa}
\scalebox{0.9}{
\begin{tabular}{c c c c| c c c }
\hline
\noalign{\smallskip}
 & DAQUAR & COCO-QA & CLEVR & DAQUAR & COCO-QA & CLEVR \\
\noalign{\smallskip}
\hline
\noalign{\smallskip}
&\multicolumn{3}{c|}{Weighted Kendall's tau $\tau_w$} & \multicolumn{3}{c}{Wall-Clock Time (s)}\\
\noalign{\smallskip}
\hline
LogME &  0.586& 0.591 &0.281 & 116.72& 716.35 & 4665.06 \\
PACTran(Dir) &  0.671& 0.296 &0.347 & 633.16& 1169.91 & 428.03 \\
PACTran(Gam)  & 0.595& 0.419 &0.319  & 614.23& 1061.72 & 428.49 \\
PACTran(Gau)  &  0.478& 0.378 & 0.415 &  637.39& 1075.88 & 418.34 \\
\underline{EMMS}  & \textbf{0.712} & \textbf{0.812} & \textbf{0.804} & \textbf{50.54} & \textbf{263.72} & \textbf{274.56}\\
\hline
\end{tabular} }
\end{center}
\end{table}

\subsection{Text Question Answering}

For natural language understanding, we consider Text Question Answering (TQA) as a reading comprehension task, where the response to each question is a text segment extracted directly from the affiliated reading passage, or the question may indeed be deemed unanswerable. Details of pre-trained models and how to finetune are described in Appendix Sec.B.



%


\noindent\textbf{Performance and wall-clock time comparison. }
In Table~\ref{tab:sup-tqa}, the performance improvement of EMMS on the TQA is consistent with the enhancements observed in the earlier mentioned computer vision tasks. More specifically, our EMMS attains accuracies of 60.3\% and 46.3\% on the Stanford Question Answering Dataset (SQuAD) versions 1.1 and 2.0 respectively, using rank correlation $\tau_w$ as an evaluation metric. This represents a significant relative increment of 11.2\% and 13.2\% compared to the performance of LogME.

\begin{table}[htbp]
\vspace{-0.1in}
  \begin{minipage}[t]{0.49\textwidth}
    \begingroup
    \renewcommand{\arraystretch}{1.1}
    \setlength{\tabcolsep}{4pt}
    \centering
    \caption{Comparison of different transferability metrics on TQA models in rank correlation $\tau_w$ with the ground truth and the wall-clock time. The LogME denotes using LogME with F-Label.  }
    \label{tab:sup-tqa}
    \scalebox{0.72}{
    \begin{tabular}{c c c| c c }
    \hline\noalign{\smallskip}
     & SQu1.1 & SQu2.0  & SQu1.1 & SQu2.0 \\
    \noalign{\smallskip}
    \hline
    \noalign{\smallskip}
     &\multicolumn{2}{c|}{Weighted Kendall's tau $\tau_w$} & \multicolumn{2}{c}{Wall-Clock Time (s)} \\
    \noalign{\smallskip}
    \hline
    LogME & 0.542 & 0.409 & 3587.22  & 3596.23   \\
    \underline{EMMS}  & \textbf{0.603} & \textbf{0.463} & \textbf{571.23}  & \textbf{589.78}  \\
    \hline
    \end{tabular} }
    \endgroup
  \end{minipage}
  \hspace{-0.02in}
  \begin{minipage}[t]{0.49\textwidth}
    \begingroup
    \renewcommand{\arraystretch}{1.1}
    \setlength{\tabcolsep}{4pt}
    \centering
    \caption{Comparison of different transferability metrics on referring expression models in rank correlation $\tau_w$ with ground truth and the time. The LogME denotes using LogME with F-Label. }
    \label{tab:sup-rec}
    \scalebox{0.7}{
    \begin{tabular}{c c c c| c c c}
    \hline\noalign{\smallskip}
     & Ref & Ref+ & Refg & Ref & Ref+ & Refg \\
    \noalign{\smallskip}
    \hline
    \noalign{\smallskip}
     &\multicolumn{3}{c|}{Weighted Kendall's tau $\tau_w$} & \multicolumn{3}{c}{Wall-Clock Time (s)} \\
    \noalign{\smallskip}
    \hline
    LogME & 0.423 & 0.389 & 0.398  & 2457.87 & 2478.90 & 2298.76  \\
    \underline{EMMS}  & \textbf{0.458} & \textbf{0.549} & \textbf{0.521}  & \textbf{454.26} & \textbf{467.92}  & \textbf{356.94}  \\
    \hline
    \end{tabular} }
    \endgroup
  \end{minipage}
  \vspace{-0.1in}
\end{table}

\begin{table}[htbp]
  \begin{minipage}[t]{0.45\textwidth}
    \centering
    \begingroup
    \renewcommand{\arraystretch}{1.1}
    \setlength{\tabcolsep}{4pt}
    \caption{The effect of the number of iterations $r$ on VQA models in rank correlation $\tau_w$. We find that even a small number of iterations allows the method to maintain its effect. }
    \label{tab:iteration-vqa}
    \scalebox{0.7}{
    \begin{tabular}{c c c c| c c c }
    \hline\noalign{\smallskip}
     & DAQUAR & COCO & CLEVR & DAQUAR & COCO & CLEVR  \\
    \noalign{\smallskip}
    \hline
    \noalign{\smallskip}
    &\multicolumn{3}{c|}{Weighted Kendall's tau $\tau_w$} & \multicolumn{3}{c}{Wall-Clock Time (s)} \\
    \noalign{\smallskip}
    \hline
    $r$ = 3  & \textbf{0.743} & 0.812 & 0.804 & 111.05& 735.21& 745.11 \\
    $r$ = 2  & 0.712 & 0.812 & 0.804 & 78.01 & 536.45 & 573.22 \\
    $r$ = 1  & 0.712 & \textbf{0.812} & \textbf{0.804} & \textbf{50.54} & \textbf{263.72} & \textbf{274.56} \\
    \hline
    \end{tabular} }
    \endgroup
  \end{minipage}
  \hspace{0.3in}
  \begin{minipage}[t]{0.45\textwidth}
    \centering
    \begingroup
    \renewcommand{\arraystretch}{1.1}
    \setlength{\tabcolsep}{4pt}
    \caption{Performance on F-Label using foundation model. We found that using the foundation model brings some improvement in the results compared to using the normal model. }
    \label{tab:caption-model-effect}
    \scalebox{0.84}{
    \begin{tabular}{c c c c c c }
    \hline\noalign{\smallskip}
     &  F8k & F30k & RSD & F10k-H & F10k-R \\
    \noalign{\smallskip}
    \hline
    \noalign{\smallskip}
    \multicolumn{6}{c}{ \qquad \quad Weighted Kendall's tau $\tau_w$} \\
    \noalign{\smallskip}
    \hline
    $Clip_{B}$  &0.453& 0.393& 0.704& 0.480& 0.634 \\
    $Clip_{L}$  & \textbf{0.510} & \textbf{0.448} & \textbf{0.704} & \textbf{0.802} & \textbf{0.677} \\
    \hline
    \end{tabular} 
    }
    \endgroup
  \end{minipage}
  \vspace{-0.18in}
\end{table}

\subsection{Referring Expression Comprehension}
Referring expression comprehension (REC) is a widely challenging task because it requires precise alignment between linguistic concepts and image features. To address this, the objects in each image are represented as a sequence of discrete tokens, while their bounding box corner coordinates are turned into integer location tokens. This allows for a unified F-Label to be extracted using various language models. More details about the pre-trained models can be found in Appendix Sec.B.
%

%


%

\noindent\textbf{Performance and wall-clock time comparison. }
As shown in Table~\ref{tab:sup-rec}, our EMMS continues to exhibit its superiority in the enhancement of performance on the REC task, an \textit{instance-level cross-modal} localization task.
Specifically, the proposed EMMS produces accuracies of 45.8\%, 54.9\%, and 52.1\% on the RefCOCO, RefCOCO+, and RefCOCOg datasets respectively. 
This significantly surpasses its counterpart, LogME, in terms of margins when evaluated with rank correlation $\tau_w$.

\subsection{Ablation Analysis}
\noindent\textbf{Comparison with different number of F-Label} Here we denote the number of F-Label is $K$ and choose the image caption task to illustrate the impact of $K$ on our solution. As shown in Table~\ref{tab:caption-K-effect}. We find that increasing $K$ in a certain range brings a gain in effectiveness to our method, but when K becomes larger, the time also increases and we find that $K=4$ is not as effective as $K=3$. We believe that the increase in $K$ brings difficulties in fitting the true Label, resulting in a loss of effectiveness. Therefore, we use $K=3$ for the sake of effect and time.

\noindent\textbf{Comparison with different number of iterations}
The number of iterations affects the EMMS time, here we conduct experiments on the VQA task for the effect of the number of iterations on the results. As shown in Table .\ref{tab:iteration-vqa}, we find that the number of iterations does not have a large impact on the performance of our method, and even a small number of iterations can guarantee the final result(e.g. the number of iterations is 1). We believe that firstly our method converges very fast. And secondly, for the ranking problem of model ranking, even if the convergence is not sufficient, the original order can still be maintained to a large extent in EMMS, thus ensuring the effect.

\noindent\textbf{Performance on F-Label using small model}
On the one hand, using foundation model can extract the joint embedding compared to the small model, which allows EMMS to be extended to tasks with multiple forms of labels. On the other hand, the foundation model can handle many types of tasks, so we can use the foundation model for different tasks for label embedding. As shown in Table .\ref{tab:caption-model-effect}, we experimentally demonstrate that the use of the foundation model leads to more accurate F-Label extraction and thus to an improvement in the performance of the method.

\noindent\textbf{The effect of using a single foundation model} We investigate how EMMS is influenced when only a single foundation model is provided. We conduct experiments on image classification and image captioning. We consider EMMS with the single foundation model including language foundation model (1) GPT-2~\cite{radford2019language}, (2) BERT~\cite{devlin2018bert} , (3) RoBerta~\cite{liu2019roberta}, and multimodal foundation model (4) CLIP~\cite{radford2021learning}, (5) FLAVA~\cite{singh2022flava}, and (6) AltCLIP~\cite{chen2022altclip}. For comparison, we include the result of our EMMS with default setting (K=3, i.e. CLIP, BERT, and GPT-2) and the result of previous state-of-the-art methods obtained from LogME, NLEEP and TransRate. The results are reported in Table \ref{tab:single_cls} and Table \ref{tab:single_cap}.

We have several observations. (1) Different downstream tasks prefer F-Labels obtained from different foundation models. No single foundation model is dominant in all target tasks. In particular, CLIP is not the best model for extracting F-Labels. (2) For image classification, both language and multimodal foundation models are competent for acquiring F-Labels. For image captioning, multimodal foundation models are more appropriate for extracting F-Labels than language foundation models. (3) Our EMMS can achieve the best results by combining F-Labels obtained from multiple foundation models.

\begingroup
\renewcommand{\arraystretch}{1.1}
\setlength{\tabcolsep}{4pt}
\begin{table}[H]
\begin{center}
\caption{The effect of the single foundation model on EMMS. The results are obtained on image classification regarding $\tau_w$. }
\label{tab:single_cls}

\scalebox{0.8}{
\begin{tabular}{l c c c c c c c c c c c c c}
\hline\noalign{\smallskip}
 & Aircraft & Caltech & Cars & CF-10 & CF-100 & DTD & Flowers & Food & Pets & SUN & VOC& Avg. & SOTA/All\\
\noalign{\smallskip}
\hline
\noalign{\smallskip}
&\multicolumn{12}{c}{Weighted Kendall's tau $\tau_w$} \\
\noalign{\smallskip}
\hline
Previous SOTA & 0.299 & 0.412 & 0.693 & \textbf{0.741} & 0.736 & \textbf{0.621} & \textbf{0.655} & 0.580 & 0.707 & \textbf{0.619} & 0.651 & 0.610 &4/11\\
(1) Gpt2 & \textbf{0.481} & \textbf{0.463} & 0.448 & 0.652 & \textbf{0.745} & \textbf{0.621} & 0.562 & 0.652 & \textbf{0.740} & 0.616 & \textbf{0.730} & 0.610 & 6/11\\
(2) Bert & \textbf{0.481} & 0.444 & 0.458 & 0.718 & \textbf{0.745} & \textbf{0.621} & 0.562 & 0.592 & \textbf{0.740} & 0.616 & \textbf{0.730} & 0.609 &5/11 \\
(3) RoBerta  & 0.448 & 0.444 & 0.507 & 0.701 & \textbf{0.745} & 0.608 & 0.562 & 0.580 & \textbf{0.740} & 0.574 & \textbf{0.730} & 0.604 &3/11 \\
(4) CLIP  & \textbf{0.481} & 0.444 & \underline{0.496} & 0.608 & \underline{0.720} & \textbf{0.621} & 0.562 & \underline{0.558} & \textbf{0.740} & \underline{0.616} & 0.706 & 0.595 &3/11 \\
(5) FLAVA  & \textbf{0.481} & 0.444 & 0.508 & \textbf{0.741} & \textbf{0.745} & \textbf{0.621} & 0.562 & 0.652 & \textbf{0.740} & 0.574 & 0.706 & 0.615 &5/11 \\
(6) AltCLIP & \textbf{0.481} & 0.444 & 0.437 & \textbf{0.741} & \textbf{0.745} & \textbf{0.621} & 0.562 & 0.580 & \textbf{0.740} & 0.595 & \textbf{0.730} & 0.607 &6/11\\
 EMMS  & \textbf{0.481} & 0.444 & \textbf{0.706} & 0.718 & \textbf{0.745} & \textbf{0.621} & 0.562 & \textbf{0.673} & \textbf{0.740} & \textbf{0.619} & \textbf{0.730} & \textbf{0.639} &8/11\\

\hline
\end{tabular}
}
\vspace{-0.4in}
\end{center}
\end{table}
\endgroup

\begingroup
\renewcommand{\arraystretch}{1.1}
\setlength{\tabcolsep}{4pt}
\begin{table}[H]
\begin{center}
\caption{The effect of the single foundation model on EMMS. The results are obtained on image captioning regarding $\tau_w$. }
\label{tab:single_cap}

\scalebox{1.0}{
\begin{tabular}{l c c c c c c c}
\hline\noalign{\smallskip}
 & F8k & F30k & RSD & F10kH & F10kR & Avg & SOTA/All \\
\noalign{\smallskip}
\hline
\noalign{\smallskip}
&\multicolumn{6}{c}{Weighted Kendall's tau $\tau_w$} \\
\noalign{\smallskip}
\hline
LogME(Clip) & 0.530 & 0.393 &0.618 &0.764 &0.634 & 0.588 &0/5 \\
(1) Gpt2 & 0.566 & 0.393 & 0.431 & 0.715 & 0.618 & 0.545 & 0/5\\
(2) Bert & 0.395 & 0.319 & 0.448 & \textbf{0.802} & \textbf{0.711} & 0.535 &2/5 \\
(3) RoBerta  & 0.346 & 0.111 & 0.587 & 0.571 & 0.566 & 0.436 &0/5 \\
(4) CLIP  & 0.510 & 0.448 & \textbf{0.704} & \textbf{0.802} & 0.678 & 0.628 &2/5 \\
(5) FLAVA  & 0.463 & 0.382 & 0.693 & 0.704 & 0.678 & 0.584 &0/5 \\
(6) AltCLIP & 0.453 & 0.448 & 0.623 & \textbf{0.802} & 0.678 & 0.601 &1/5 \\
 EMMS  &\textbf{0.660} & \textbf{0.504} & \textbf{0.704} & \textbf{0.802} & 0.678 & \textbf{0.670} &4/5 \\

\hline
\end{tabular}
}
\end{center}
\end{table}
\endgroup

\begingroup
\renewcommand{\arraystretch}{1.1}
\setlength{\tabcolsep}{4pt}
\begin{table}[t]
\begin{center}
\caption{EMMS under different number of F-Label of transferability assessment on image caption task. The improvement of $K$ in a certain range brought an increase in rank correlation $\tau_w$. }
\label{tab:caption-K-effect}
\scalebox{0.85}{
\begin{tabular}{c c c c c c |c c c c c c}
\hline\noalign{\smallskip}
 & F8k & F30k & RSD & F10k-H & F10k-R  & & F8k & F30k & RSD & F10k-H & F10k-R\\
\noalign{\smallskip}
\hline
\noalign{\smallskip}
 &\multicolumn{5}{c|}{Weighted Kendall's tau $\tau_w$} & & \multicolumn{5}{c}{Weighted Kendall's tau $\tau_w$} \\
\noalign{\smallskip}
\hline
K=1  &0.490& 0.386& 0.527& 0.772& 0.668& K=2  &0.574 & 0.454& 0.553 &0.762 &0.646 \\
 \underline{K=3}  & \textbf{0.660} & \textbf{0.504} & \textbf{0.704}& \textbf{0.802} & \textbf{0.678}& K=4  & 0.660 & 0.504 & 0.704& 0.802 & 0.644 \\
\hline
\end{tabular} }
\vspace{-0.2in}
\end{center}
\end{table}
\endgroup 

\section{Conclusion}
How to select a pre-trained model for different tasks quickly and effectively is an important issue in the field of transfer learning. This paper proposes an efficient multi-task model selector(EMMS) that can be applied to many types of tasks. EMMS uses foundation model for Label embedding in order to transform diverse label formats of different tasks into the same form and see them as noisy labels. To estimate a model’s transferability, EMMS model this problem as a simple weighted linear regression, which can be solved use an alternating minimization algorithm. Compared with existing methods, EMMS achieves the first model selection in multi-task scenarios, including image caption, referring segmentation, etc., with high speed and great results. For the \textbf{limitations} of the method, if the foundation model generalize very poor on downstream tasks, it may lead to low-quality label embedding, which is a drawback of our method. Moreover, building a holistic benchmark of various label embeddings would be useful in many applications such as multi-modal adaptation \cite{lin2023multimodality}. We leave it as a future work.

\newpage

\newpage

\bibliographystyle{unsrt}
\bibliography{main}

\begin{thebibliography}{10}

\bibitem{krizhevsky2017imagenet}
Alex Krizhevsky, Ilya Sutskever, and Geoffrey~E Hinton.
\newblock Imagenet classification with deep convolutional neural networks.
\newblock {\em Communications of the ACM}, 60(6):84--90, 2017.

\bibitem{mao2014deep}
Junhua Mao, Wei Xu, Yi~Yang, Jiang Wang, Zhiheng Huang, and Alan Yuille.
\newblock Deep captioning with multimodal recurrent neural networks (m-rnn).
\newblock {\em arXiv preprint arXiv:1412.6632}, 2014.

\bibitem{xiong2016dynamic}
Caiming Xiong, Stephen Merity, and Richard Socher.
\newblock Dynamic memory networks for visual and textual question answering.
\newblock In {\em International conference on machine learning}, pages
  2397--2406. PMLR, 2016.

\bibitem{yang2022lavt}
Zhao Yang, Jiaqi Wang, Yansong Tang, Kai Chen, Hengshuang Zhao, and Philip~HS
  Torr.
\newblock Lavt: Language-aware vision transformer for referring image
  segmentation.
\newblock In {\em Proceedings of the IEEE/CVF Conference on Computer Vision and
  Pattern Recognition}, pages 18155--18165, 2022.

\bibitem{thomee2016yfcc100m}
Bart Thomee, David~A Shamma, Gerald Friedland, Benjamin Elizalde, Karl Ni,
  Douglas Poland, Damian Borth, and Li-Jia Li.
\newblock Yfcc100m: The new data in multimedia research.
\newblock {\em Communications of the ACM}, 59(2):64--73, 2016.

\bibitem{he2016deep}
Kaiming He, Xiangyu Zhang, Shaoqing Ren, and Jian Sun.
\newblock Deep residual learning for image recognition.
\newblock In {\em Proceedings of the IEEE conference on computer vision and
  pattern recognition}, pages 770--778, 2016.

\bibitem{huang2017densely}
Gao Huang, Zhuang Liu, Laurens Van Der~Maaten, and Kilian~Q Weinberger.
\newblock Densely connected convolutional networks.
\newblock In {\em Proceedings of the IEEE conference on computer vision and
  pattern recognition}, pages 4700--4708, 2017.

\bibitem{szegedy2015going}
Christian Szegedy, Wei Liu, Yangqing Jia, Pierre Sermanet, Scott Reed, Dragomir
  Anguelov, Dumitru Erhan, Vincent Vanhoucke, and Andrew Rabinovich.
\newblock Going deeper with convolutions.
\newblock In {\em Proceedings of the IEEE conference on computer vision and
  pattern recognition}, pages 1--9, 2015.

\bibitem{dosovitskiy2020image}
Alexey Dosovitskiy, Lucas Beyer, Alexander Kolesnikov, Dirk Weissenborn,
  Xiaohua Zhai, Thomas Unterthiner, Mostafa Dehghani, Matthias Minderer, Georg
  Heigold, Sylvain Gelly, et~al.
\newblock An image is worth 16x16 words: Transformers for image recognition at
  scale.
\newblock {\em arXiv preprint arXiv:2010.11929}, 2020.

\bibitem{liu2021swin}
Ze~Liu, Yutong Lin, Yue Cao, Han Hu, Yixuan Wei, Zheng Zhang, Stephen Lin, and
  Baining Guo.
\newblock Swin transformer: Hierarchical vision transformer using shifted
  windows.
\newblock In {\em Proceedings of the IEEE/CVF international conference on
  computer vision}, pages 10012--10022, 2021.

\bibitem{wang2021pyramid}
Wenhai Wang, Enze Xie, Xiang Li, Deng-Ping Fan, Kaitao Song, Ding Liang, Tong
  Lu, Ping Luo, and Ling Shao.
\newblock Pyramid vision transformer: A versatile backbone for dense prediction
  without convolutions.
\newblock In {\em Proceedings of the IEEE/CVF international conference on
  computer vision}, pages 568--578, 2021.

\bibitem{nguyen2020leep}
Cuong Nguyen, Tal Hassner, Matthias Seeger, and Cedric Archambeau.
\newblock Leep: A new measure to evaluate transferability of learned
  representations.
\newblock In {\em International Conference on Machine Learning}, pages
  7294--7305. PMLR, 2020.

\bibitem{you2021logme}
Kaichao You, Yong Liu, Jianmin Wang, and Mingsheng Long.
\newblock Logme: Practical assessment of pre-trained models for transfer
  learning.
\newblock In {\em International Conference on Machine Learning}, pages
  12133--12143. PMLR, 2021.

\bibitem{shao2022not}
Wenqi Shao, Xun Zhao, Yixiao Ge, Zhaoyang Zhang, Lei Yang, Xiaogang Wang, Ying
  Shan, and Ping Luo.
\newblock Not all models are equal: Predicting model transferability in a
  self-challenging fisher space.
\newblock In {\em Computer Vision--ECCV 2022: 17th European Conference, Tel
  Aviv, Israel, October 23--27, 2022, Proceedings, Part XXXIV}, pages 286--302.
  Springer, 2022.

\bibitem{yosinski2014transferable}
Jason Yosinski, Jeff Clune, Yoshua Bengio, and Hod Lipson.
\newblock How transferable are features in deep neural networks?
\newblock {\em Advances in neural information processing systems}, 27, 2014.

\bibitem{he2019rethinking}
Kaiming He, Ross Girshick, and Piotr Doll{\'a}r.
\newblock Rethinking imagenet pre-training.
\newblock In {\em Proceedings of the IEEE/CVF International Conference on
  Computer Vision}, pages 4918--4927, 2019.

\bibitem{zamir2018taskonomy}
Amir~R Zamir, Alexander Sax, William Shen, Leonidas~J Guibas, Jitendra Malik,
  and Silvio Savarese.
\newblock Taskonomy: Disentangling task transfer learning.
\newblock In {\em Proceedings of the IEEE conference on computer vision and
  pattern recognition}, pages 3712--3722, 2018.

\bibitem{huang2022frustratingly}
Long-Kai Huang, Junzhou Huang, Yu~Rong, Qiang Yang, and Ying Wei.
\newblock Frustratingly easy transferability estimation.
\newblock In {\em International Conference on Machine Learning}, pages
  9201--9225. PMLR, 2022.

\bibitem{ding2022pactran}
Nan Ding, Xi~Chen, Tomer Levinboim, Soravit Changpinyo, and Radu Soricut.
\newblock Pactran: Pac-bayesian metrics for estimating the transferability of
  pretrained models to classification tasks.
\newblock In {\em Computer Vision--ECCV 2022: 17th European Conference, Tel
  Aviv, Israel, October 23--27, 2022, Proceedings, Part XXXIV}, pages 252--268.
  Springer, 2022.

\bibitem{pandy2022transferability}
Michal P{\'a}ndy, Andrea Agostinelli, Jasper Uijlings, Vittorio Ferrari, and
  Thomas Mensink.
\newblock Transferability estimation using bhattacharyya class separability.
\newblock In {\em Proceedings of the IEEE/CVF Conference on Computer Vision and
  Pattern Recognition}, pages 9172--9182, 2022.

\bibitem{radford2021learning}
Alec Radford, Jong~Wook Kim, Chris Hallacy, Aditya Ramesh, Gabriel Goh,
  Sandhini Agarwal, Girish Sastry, Amanda Askell, Pamela Mishkin, Jack Clark,
  et~al.
\newblock Learning transferable visual models from natural language
  supervision.
\newblock In {\em International conference on machine learning}, pages
  8748--8763. PMLR, 2021.

\bibitem{radford2019language}
Alec Radford, Jeffrey Wu, Rewon Child, David Luan, Dario Amodei, Ilya
  Sutskever, et~al.
\newblock Language models are unsupervised multitask learners.
\newblock {\em OpenAI blog}, 1(8):9, 2019.

\bibitem{antol2015vqa}
Stanislaw Antol, Aishwarya Agrawal, Jiasen Lu, Margaret Mitchell, Dhruv Batra,
  C~Lawrence Zitnick, and Devi Parikh.
\newblock Vqa: Visual question answering.
\newblock In {\em Proceedings of the IEEE international conference on computer
  vision}, pages 2425--2433, 2015.

\bibitem{choi2018quac}
Eunsol Choi, He~He, Mohit Iyyer, Mark Yatskar, Wen-tau Yih, Yejin Choi, Percy
  Liang, and Luke Zettlemoyer.
\newblock Quac: Question answering in context.
\newblock {\em arXiv preprint arXiv:1808.07036}, 2018.

\bibitem{wu2019facial}
Yue Wu and Qiang Ji.
\newblock Facial landmark detection: A literature survey.
\newblock {\em International Journal of Computer Vision}, 127:115--142, 2019.

\bibitem{li2021ranking}
Yandong Li, Xuhui Jia, Ruoxin Sang, Yukun Zhu, Bradley Green, Liqiang Wang, and
  Boqing Gong.
\newblock Ranking neural checkpoints.
\newblock In {\em Proceedings of the IEEE/CVF Conference on Computer Vision and
  Pattern Recognition}, pages 2663--2673, 2021.

\bibitem{mikolov2010recurrent}
Tomas Mikolov, Martin Karafi{\'a}t, Lukas Burget, Jan Cernock{\`y}, and Sanjeev
  Khudanpur.
\newblock Recurrent neural network based language model.
\newblock In {\em Interspeech}, volume~2, pages 1045--1048. Makuhari, 2010.

\bibitem{mikolov2013efficient}
Tomas Mikolov, Kai Chen, Greg Corrado, and Jeffrey Dean.
\newblock Efficient estimation of word representations in vector space.
\newblock {\em arXiv preprint arXiv:1301.3781}, 2013.

\bibitem{pennington2014glove}
Jeffrey Pennington, Richard Socher, and Christopher~D Manning.
\newblock Glove: Global vectors for word representation.
\newblock In {\em Proceedings of the 2014 conference on empirical methods in
  natural language processing (EMNLP)}, pages 1532--1543, 2014.

\bibitem{wang2022unifying}
Peng Wang, An~Yang, Rui Men, Junyang Lin, Shuai Bai, Zhikang Li, Jianxin Ma,
  Chang Zhou, Jingren Zhou, and Hongxia Yang.
\newblock Unifying architectures, tasks, and modalities through a simple
  sequence-to-sequence learning framework.
\newblock {\em arXiv preprint arXiv:2202.03052}, 2022.

\bibitem{zhu2022uni}
Xizhou Zhu, Jinguo Zhu, Hao Li, Xiaoshi Wu, Hongsheng Li, Xiaohua Wang, and
  Jifeng Dai.
\newblock Uni-perceiver: Pre-training unified architecture for generic
  perception for zero-shot and few-shot tasks.
\newblock In {\em Proceedings of the IEEE/CVF Conference on Computer Vision and
  Pattern Recognition}, pages 16804--16815, 2022.

\bibitem{mao2022understanding}
Chengzhi Mao, Scott Geng, Junfeng Yang, Xin Wang, and Carl Vondrick.
\newblock Understanding zero-shot adversarial robustness for large-scale
  models.
\newblock {\em arXiv preprint arXiv:2212.07016}, 2022.

\bibitem{houlsby2019parameter}
Neil Houlsby, Andrei Giurgiu, Stanislaw Jastrzebski, Bruna Morrone, Quentin
  De~Laroussilhe, Andrea Gesmundo, Mona Attariyan, and Sylvain Gelly.
\newblock Parameter-efficient transfer learning for nlp.
\newblock In {\em International Conference on Machine Learning}, pages
  2790--2799. PMLR, 2019.

\bibitem{gao2021clip}
Peng Gao, Shijie Geng, Renrui Zhang, Teli Ma, Rongyao Fang, Yongfeng Zhang,
  Hongsheng Li, and Yu~Qiao.
\newblock Clip-adapter: Better vision-language models with feature adapters.
\newblock {\em arXiv preprint arXiv:2110.04544}, 2021.

\bibitem{alizadeh2003second}
Farid Alizadeh and Donald Goldfarb.
\newblock Second-order cone programming.
\newblock {\em Mathematical programming}, 95(1):3--51, 2003.

\bibitem{devlin2018bert}
Jacob Devlin, Ming-Wei Chang, Kenton Lee, and Kristina Toutanova.
\newblock Bert: Pre-training of deep bidirectional transformers for language
  understanding.
\newblock {\em arXiv preprint arXiv:1810.04805}, 2018.

\bibitem{lobo1998applications}
Miguel~Sousa Lobo, Lieven Vandenberghe, Stephen Boyd, and Herv{\'e} Lebret.
\newblock Applications of second-order cone programming.
\newblock {\em Linear algebra and its applications}, 284(1-3):193--228, 1998.

\bibitem{martins2016softmax}
Andre Martins and Ramon Astudillo.
\newblock From softmax to sparsemax: A sparse model of attention and
  multi-label classification.
\newblock In {\em International conference on machine learning}, pages
  1614--1623. PMLR, 2016.

\bibitem{maji2013fine}
Subhransu Maji, Esa Rahtu, Juho Kannala, Matthew Blaschko, and Andrea Vedaldi.
\newblock Fine-grained visual classification of aircraft.
\newblock {\em arXiv preprint arXiv:1306.5151}, 2013.

\bibitem{fei2004learning}
Li~Fei-Fei, Rob Fergus, and Pietro Perona.
\newblock Learning generative visual models from few training examples: An
  incremental bayesian approach tested on 101 object categories.
\newblock In {\em 2004 conference on computer vision and pattern recognition
  workshop}, pages 178--178. IEEE, 2004.

\bibitem{krause2013collecting}
Jonathan Krause, Jia Deng, Michael Stark, and Li~Fei-Fei.
\newblock Collecting a large-scale dataset of fine-grained cars.
\newblock 2013.

\bibitem{krizhevsky2009learning}
Alex Krizhevsky, Geoffrey Hinton, et~al.
\newblock Learning multiple layers of features from tiny images.
\newblock 2009.

\bibitem{cimpoi2014describing}
Mircea Cimpoi, Subhransu Maji, Iasonas Kokkinos, Sammy Mohamed, and Andrea
  Vedaldi.
\newblock Describing textures in the wild.
\newblock In {\em Proceedings of the IEEE conference on computer vision and
  pattern recognition}, pages 3606--3613, 2014.

\bibitem{nilsback2008automated}
Maria-Elena Nilsback and Andrew Zisserman.
\newblock Automated flower classification over a large number of classes.
\newblock In {\em 2008 Sixth Indian Conference on Computer Vision, Graphics \&
  Image Processing}, pages 722--729. IEEE, 2008.

\bibitem{bossard2014food}
Lukas Bossard, Matthieu Guillaumin, and Luc Van~Gool.
\newblock Food-101--mining discriminative components with random forests.
\newblock In {\em Computer Vision--ECCV 2014: 13th European Conference, Zurich,
  Switzerland, September 6-12, 2014, Proceedings, Part VI 13}, pages 446--461.
  Springer, 2014.

\bibitem{he20162016}
Kaiming He, Xiangyu Zhang, Shaoqing Ren, and Jian Sun.
\newblock 2016 ieee conference on computer vision and pattern recognition
  (cvpr).
\newblock {\em Las Vegas, NV, USA}, 1:770--78, 2016.

\bibitem{xiao2010sun}
Jianxiong Xiao, James Hays, Krista~A Ehinger, Aude Oliva, and Antonio Torralba.
\newblock Sun database: Large-scale scene recognition from abbey to zoo.
\newblock In {\em 2010 IEEE computer society conference on computer vision and
  pattern recognition}, pages 3485--3492. IEEE, 2010.

\bibitem{everingham2010pascal}
Mark Everingham, Luc Van~Gool, Christopher~KI Williams, John Winn, and Andrew
  Zisserman.
\newblock The pascal visual object classes (voc) challenge.
\newblock {\em International journal of computer vision}, 88:303--338, 2010.

\bibitem{rashtchian2010collecting}
Cyrus Rashtchian, Peter Young, Micah Hodosh, and Julia Hockenmaier.
\newblock Collecting image annotations using amazon’s mechanical turk.
\newblock In {\em Proceedings of the NAACL HLT 2010 workshop on creating speech
  and language data with Amazon’s Mechanical Turk}, pages 139--147, 2010.

\bibitem{plummer2015flickr30k}
Bryan~A Plummer, Liwei Wang, Chris~M Cervantes, Juan~C Caicedo, Julia
  Hockenmaier, and Svetlana Lazebnik.
\newblock Flickr30k entities: Collecting region-to-phrase correspondences for
  richer image-to-sentence models.
\newblock In {\em Proceedings of the IEEE international conference on computer
  vision}, pages 2641--2649, 2015.

\bibitem{gan2017stylenet}
Chuang Gan, Zhe Gan, Xiaodong He, Jianfeng Gao, and Li~Deng.
\newblock Stylenet: Generating attractive visual captions with styles.
\newblock In {\em Proceedings of the IEEE conference on computer vision and
  pattern recognition}, pages 3137--3146, 2017.

\bibitem{lu2017exploring}
Xiaoqiang Lu, Binqiang Wang, Xiangtao Zheng, and Xuelong Li.
\newblock Exploring models and data for remote sensing image caption
  generation.
\newblock {\em IEEE Transactions on Geoscience and Remote Sensing},
  56(4):2183--2195, 2017.

\bibitem{ren2015exploring}
Mengye Ren, Ryan Kiros, and Richard Zemel.
\newblock Exploring models and data for image question answering.
\newblock {\em Advances in neural information processing systems}, 28, 2015.

\bibitem{malinowski2014multi}
Mateusz Malinowski and Mario Fritz.
\newblock A multi-world approach to question answering about real-world scenes
  based on uncertain input.
\newblock {\em Advances in neural information processing systems}, 27, 2014.

\bibitem{johnson2017clevr}
Justin Johnson, Bharath Hariharan, Laurens Van Der~Maaten, Li~Fei-Fei,
  C~Lawrence~Zitnick, and Ross Girshick.
\newblock Clevr: A diagnostic dataset for compositional language and elementary
  visual reasoning.
\newblock In {\em Proceedings of the IEEE conference on computer vision and
  pattern recognition}, pages 2901--2910, 2017.

\bibitem{rajpurkar2016squad}
Pranav Rajpurkar, Jian Zhang, Konstantin Lopyrev, and Percy Liang.
\newblock Squad: 100,000+ questions for machine comprehension of text.
\newblock {\em arXiv preprint arXiv:1606.05250}, 2016.

\bibitem{rajpurkar2018know}
Pranav Rajpurkar, Robin Jia, and Percy Liang.
\newblock Know what you don't know: Unanswerable questions for squad.
\newblock {\em arXiv preprint arXiv:1806.03822}, 2018.

\bibitem{yu2016modeling}
Licheng Yu, Patrick Poirson, Shan Yang, Alexander~C Berg, and Tamara~L Berg.
\newblock Modeling context in referring expressions.
\newblock In {\em Computer Vision--ECCV 2016: 14th European Conference,
  Amsterdam, The Netherlands, October 11-14, 2016, Proceedings, Part II 14},
  pages 69--85. Springer, 2016.

\bibitem{mao2016generation}
Junhua Mao, Jonathan Huang, Alexander Toshev, Oana Camburu, Alan~L Yuille, and
  Kevin Murphy.
\newblock Generation and comprehension of unambiguous object descriptions.
\newblock In {\em Proceedings of the IEEE conference on computer vision and
  pattern recognition}, pages 11--20, 2016.

\bibitem{liu2019roberta}
Yinhan Liu, Myle Ott, Naman Goyal, Jingfei Du, Mandar Joshi, Danqi Chen, Omer
  Levy, Mike Lewis, Luke Zettlemoyer, and Veselin Stoyanov.
\newblock Roberta: A robustly optimized bert pretraining approach.
\newblock {\em arXiv preprint arXiv:1907.11692}, 2019.

\bibitem{singh2022flava}
Amanpreet Singh, Ronghang Hu, Vedanuj Goswami, Guillaume Couairon, Wojciech
  Galuba, Marcus Rohrbach, and Douwe Kiela.
\newblock Flava: A foundational language and vision alignment model.
\newblock In {\em Proceedings of the IEEE/CVF Conference on Computer Vision and
  Pattern Recognition}, pages 15638--15650, 2022.

\bibitem{chen2022altclip}
Zhongzhi Chen, Guang Liu, Bo-Wen Zhang, Fulong Ye, Qinghong Yang, and Ledell
  Wu.
\newblock Altclip: Altering the language encoder in clip for extended language
  capabilities.
\newblock {\em arXiv preprint arXiv:2211.06679}, 2022.

\bibitem{lin2023multimodality}
Zhiqiu Lin, Samuel Yu, Zhiyi Kuang, Deepak Pathak, and Deva Ramana.
\newblock Multimodality helps unimodality: Cross-modal few-shot learning with
  multimodal models.
\newblock {\em arXiv preprint arXiv:2301.06267}, 2023.

\bibitem{byrne2011alternating}
Charles~L Byrne.
\newblock Alternating minimization and alternating projection algorithms: A
  tutorial.
\newblock {\em Sciences New York}, pages 1--41, 2011.

\bibitem{lewis2019bart}
Mike Lewis, Yinhan Liu, Naman Goyal, Marjan Ghazvininejad, Abdelrahman Mohamed,
  Omer Levy, Ves Stoyanov, and Luke Zettlemoyer.
\newblock Bart: Denoising sequence-to-sequence pre-training for natural
  language generation, translation, and comprehension.
\newblock {\em arXiv preprint arXiv:1910.13461}, 2019.

\bibitem{clark2020electra}
Kevin Clark, Minh-Thang Luong, Quoc~V Le, and Christopher~D Manning.
\newblock Electra: Pre-training text encoders as discriminators rather than
  generators.
\newblock {\em arXiv preprint arXiv:2003.10555}, 2020.

\bibitem{wah2011caltech}
Catherine Wah, Steve Branson, Peter Welinder, Pietro Perona, and Serge
  Belongie.
\newblock The caltech-ucsd birds-200-2011 dataset.
\newblock 2011.

\bibitem{tan2019mnasnet}
Mingxing Tan, Bo~Chen, Ruoming Pang, Vijay Vasudevan, Mark Sandler, Andrew
  Howard, and Quoc~V Le.
\newblock Mnasnet: Platform-aware neural architecture search for mobile.
\newblock In {\em Proceedings of the IEEE/CVF conference on computer vision and
  pattern recognition}, pages 2820--2828, 2019.

\bibitem{sandler2018mobilenetv2}
Mark Sandler, Andrew Howard, Menglong Zhu, Andrey Zhmoginov, and Liang-Chieh
  Chen.
\newblock Mobilenetv2: Inverted residuals and linear bottlenecks.
\newblock In {\em Proceedings of the IEEE conference on computer vision and
  pattern recognition}, pages 4510--4520, 2018.

\bibitem{szegedy2016rethinking}
Christian Szegedy, Vincent Vanhoucke, Sergey Ioffe, Jon Shlens, and Zbigniew
  Wojna.
\newblock Rethinking the inception architecture for computer vision.
\newblock In {\em Proceedings of the IEEE conference on computer vision and
  pattern recognition}, pages 2818--2826, 2016.

\bibitem{caron2021emerging}
Mathilde Caron, Hugo Touvron, Ishan Misra, Herv{\'e} J{\'e}gou, Julien Mairal,
  Piotr Bojanowski, and Armand Joulin.
\newblock Emerging properties in self-supervised vision transformers.
\newblock In {\em Proceedings of the IEEE/CVF international conference on
  computer vision}, pages 9650--9660, 2021.

\bibitem{chen2021empirical}
X~Chen, S~Xie, and K~He.
\newblock An empirical study of training self-supervised visual transformers.
  arxiv e-prints.
\newblock {\em arXiv preprint arXiv:2104.02057}, 2021.

\bibitem{chen2015microsoft}
Xinlei Chen, Hao Fang, Tsung-Yi Lin, Ramakrishna Vedantam, Saurabh Gupta, Piotr
  Doll{\'a}r, and C~Lawrence Zitnick.
\newblock Microsoft coco captions: Data collection and evaluation server.
\newblock {\em arXiv preprint arXiv:1504.00325}, 2015.

\bibitem{liu2022swin}
Ze~Liu, Han Hu, Yutong Lin, Zhuliang Yao, Zhenda Xie, Yixuan Wei, Jia Ning, Yue
  Cao, Zheng Zhang, Li~Dong, et~al.
\newblock Swin transformer v2: Scaling up capacity and resolution.
\newblock In {\em Proceedings of the IEEE/CVF conference on computer vision and
  pattern recognition}, pages 12009--12019, 2022.

\bibitem{goyal2017making}
Yash Goyal, Tejas Khot, Douglas Summers-Stay, Dhruv Batra, and Devi Parikh.
\newblock Making the v in vqa matter: Elevating the role of image understanding
  in visual question answering.
\newblock In {\em Proceedings of the IEEE conference on computer vision and
  pattern recognition}, pages 6904--6913, 2017.

\bibitem{yang2019xlnet}
Zhilin Yang, Zihang Dai, Yiming Yang, Jaime Carbonell, Russ~R Salakhutdinov,
  and Quoc~V Le.
\newblock Xlnet: Generalized autoregressive pretraining for language
  understanding.
\newblock {\em Advances in neural information processing systems}, 32, 2019.

\bibitem{he2020deberta}
Pengcheng He, Xiaodong Liu, Jianfeng Gao, and Weizhu Chen.
\newblock Deberta: Decoding-enhanced bert with disentangled attention.
\newblock {\em arXiv preprint arXiv:2006.03654}, 2020.

\bibitem{he2021debertav3}
Pengcheng He, Jianfeng Gao, and Weizhu Chen.
\newblock Debertav3: Improving deberta using electra-style pre-training with
  gradient-disentangled embedding sharing.
\newblock {\em arXiv preprint arXiv:2111.09543}, 2021.

\bibitem{li2022blip}
Junnan Li, Dongxu Li, Caiming Xiong, and Steven Hoi.
\newblock Blip: Bootstrapping language-image pre-training for unified
  vision-language understanding and generation.
\newblock In {\em International Conference on Machine Learning}, pages
  12888--12900. PMLR, 2022.

\bibitem{li2021align}
Junnan Li, Ramprasaath Selvaraju, Akhilesh Gotmare, Shafiq Joty, Caiming Xiong,
  and Steven Chu~Hong Hoi.
\newblock Align before fuse: Vision and language representation learning with
  momentum distillation.
\newblock {\em Advances in neural information processing systems},
  34:9694--9705, 2021.

\bibitem{wang2022ofa}
Peng Wang, An~Yang, Rui Men, Junyang Lin, Shuai Bai, Zhikang Li, Jianxin Ma,
  Chang Zhou, Jingren Zhou, and Hongxia Yang.
\newblock Ofa: Unifying architectures, tasks, and modalities through a simple
  sequence-to-sequence learning framework.
\newblock In {\em International Conference on Machine Learning}, pages
  23318--23340. PMLR, 2022.

\bibitem{li2020rethinking}
Hao Li, Pratik Chaudhari, Hao Yang, Michael Lam, Avinash Ravichandran, Rahul
  Bhotika, and Stefano Soatto.
\newblock Rethinking the hyperparameters for fine-tuning.
\newblock {\em arXiv preprint arXiv:2002.11770}, 2020.

\end{thebibliography}

\newpage

\title{Appendix of Foundation Model is Efficient \\Multimodal Multitask Model Selector}

\maketitle

\renewcommand\thesection{\Alph{section}}
\setcounter{section}{0}

\section{Method}

Here we derive in detail the regression with Unified Noisy Label Embeddings that appear in the method section of the text in Sec.\ref{sec:appen-regression} and give complete proof of the convergence of the method in Sec.\ref{sec:appen-convergence}.

\subsection{Regression with Unified Noisy Label Embeddings}\label{sec:appen-regression}
\textbf{Setup.} we assume that label embedding $z$ 
is a linear mapping of the model feature with additive Gaussian noise with a variance of $\sigma_0^2$, as given by $z=z_0 + \epsilon = w^T\hat{x} + \epsilon$ and $\epsilon\sim N(0,\sigma_0^2I_L)$
where $z_0= w^T\hat{x}$ is the regression prediction, $w\in\mathbb{R}^{D\times L}$ and $\epsilon$ are regression weights and regression error, respectively,  and $I_L$ is a L-by-L identity matrix. 

We assume that F-labels $\{z_k\}_{k=1}^K$ obtained from different foundation models are oracles that independently
provide noisy estimates of the label embedding $z$. Formally, we have $P(z_k|z) = N(z,\sigma_k^2I_L)$. Without loss of generality, we assume that $L = 1$

Then the joint probability over noisy labels for a fixed $n$, That is, for given $x^n$, we have:
\begin{equation}
   P(z^{n}_1, \cdots, z^{n}_K|x^{n}, w) = \int P(z^{n}_1, \cdots, z^{n}_K|z, x^{n}, w)P(z|x^{n}, w)dz 
\end{equation}
Due to the independence between $z_k$ and $x$, using the real label $z$, we can rewrite it as:
\begin{equation}
    P(z^{n}_1, \cdots, z^{n}_K|x^{n}, w) = \int P(z^{n}_1, \cdots, z^{n}_K|z, w)P(z|x^{n}, w)dz
\end{equation}
And using the independencies among $z_k$, we have:
\begin{equation}
 P(z^{n}_1, \cdots, z^{n}_K|z, w) = \prod_{k=1}^{K} P(z^{n}_K|z, \sigma_1^2, \cdots, \sigma_k^2)  = \frac{1}{(2\pi)^{\frac{K}{2}}\prod_{k=1}^{K}\sigma_k} \exp^{- \sum_{k=1}^{K}\frac{(z^{n}_k - z)^2}{2\sigma_k^2}}   
\end{equation}
Due to $P(z_k|z) = N(z,\sigma_k^2I_L)$, we can rewrite it as :
\begin{equation}
    P(z^{n}_1, \ldots, z^{n}_K|x^{n}, w) = \int \frac{1}{(2\pi)^{\frac{K+1}{2}}\prod_{k=0}^{K}\sigma_k} \exp^{- \sum_{k=1}^{K}\frac{(z^{n}_K - z)^2}{2\sigma_k^2} - \frac{(z - z_0)^2}{2\sigma_{0}^2}} dy
\end{equation}
which can be calculated as :
\begin{equation}
   P(z^{n}_1, \ldots, z^{n}_K|x^{n}, w) = A_1 \int e^{-A_2y^2 + A_{3}^{n}y - A_{4}^{n}} dz = A_1\sqrt{\frac{\pi}{A_2}}e^{\frac{(A_3^{n})^2}{4A_2} - A_4^{n}} 
\end{equation}

where \scalebox{0.85}{$A_1=\prod_{k=0}^K{1}/{\sigma_k}, A_2=\sum_{k=0}^K{1}/{2\sigma_k^2}, A_{3}^{n}=\sum_{k=0}^K{z_k^{n}}/{\sigma_k^2}$}, and \scalebox{0.8}{$A_{4}^{n}=\sum_{k=0}^K{(z_k^{n})^2}/2{\sigma_k^2}$}

Consider the joint probability over all $N$ instances, we have:
\begin{equation}
    P(z^{n}_1, \ldots, z^{n}_K|X, w) = \prod_{i=1}^{N}A_1\sqrt{\frac{\pi}{A_2}}e^{\frac{(A_3^{n})^2}{4A_2} - A_4^{n}}
\end{equation}
where $X \in R^{N \times D}$ denotes the feature matrix, $N$ is the number of data points and $D$ is the number of features.

Then given $N$ data points, the negative log-likelihood is given by 
\begin{equation}\label{eq:log-likelihood}
    -\mathcal{L} = \underbrace{-N\log A_1 + \frac{N}{2}\log A_2}_{\mathcal{L}_1} + \frac{1}{2}\underbrace{\sum_{n=1}^N(A^n_4 - \frac{(A^n_3)^2}{4A_2})}_{\mathcal{L}_2} + \mathrm{const}
\end{equation}
where $\mathcal{L}_1$ and $\mathcal{L}_2$ are given by
\begin{equation}
    \mathcal{L}_1 = \frac{N}{2}\log \sum_{k=0}^K\frac{1}{2\sigma_k^2} + N \sum_{k=0}^K \log \sigma_k, \quad \mathcal{L}_2 =\sum_{n=1}^N\{ \sum_{k=0}^K\frac{(z_k^n)^2}{\sigma_k^2} -\frac{(\sum_{k=0}^Kz_k^n/\sigma_k^2)^2 }{\sum_{k=1}^K1/\sigma_k^2}\}
\end{equation}
Since $\mathcal{L}_1$ is independent of input data, we focus on $\mathcal{L}_2$. To simplify the notation, we re-denote $\gamma_k = 1/\sigma_k^2$ and $\Gamma = \sum_{k=1}^K \gamma_k$. Using this notation, 
$\mathcal{L}_2$ can be rearranged as:
\begin{align}
    \mathcal{L}_2 &= \sum_{n=1}^N\{\gamma_0z_0^2 + \sum_{k=1}^K\gamma_k(z_k^n)^2 -\frac{(\sum_{k=1}^K\gamma_kz_k^n+\gamma_0z_0)^2}{\Gamma+\gamma_0} \}\\
    & = \sum_{n=1}^N\{ (\gamma_0 - \frac{\gamma_0^2}{\Gamma+\gamma_0})z_0^2 - (\frac{2\Gamma\gamma_0}{\Gamma+\gamma_0} \sum_{k=1}^K\frac{\gamma_0}{\Gamma}z_k^n)z_0 +\sum_{k=1}^K\gamma_k(z_k^n)^2-(\sum_{k=1}^K\gamma_kz_k^n)^2 \} \\
    & =  \sum_{n=1}^N\{\frac{\Gamma\gamma_0}{\Gamma+\gamma_0}(z_0 - \sum_{k=1}^K\frac{\gamma_k}{\Gamma}z_k^n)^2 + \sum_{k=1}^K \gamma_k (z_k^n)^2 -(1+\frac{\gamma_0}{\Gamma(\Gamma+\gamma_0)}(\sum_{k=1}^K\gamma_kz_k^n)^2\} \\
    & =  \sum_{n=1}^N\{\frac{\Gamma\gamma_0}{\Gamma+\gamma_0}(w^T\hat{x}^n - \sum_{k=1}^K\frac{\gamma_k}{\Gamma}z_k^n)^2 + \sum_{k=1}^K \gamma_k (z_k^n)^2 -(1+\frac{\gamma_0}{\Gamma(\Gamma+\gamma_0)}(\sum_{k=1}^K\gamma_kz_k^n)^2\}
\end{align}
Hence, the negative likelihood in Eqn.(\ref{eq:log-likelihood} can be written as 
\begin{equation}\label{eq:wlsr}
    -\mathcal{L} = \frac{\Gamma\gamma_0}{\Gamma+\gamma_0} \{\underbrace{\frac{1}{2}\sum_{i=1}^N({w^T\hat{x}^n - \sum_{k=1}^K\frac{\gamma_k}{\Gamma}z_k^n)^2}}_{s(w,t)}\}  + \mathcal{R}(\gamma_k)
\end{equation}
where $\mathcal{R}(\gamma_k) = \mathcal{L}_1 + \sum_{k=1}^K \gamma_k (z_k^n)^2 -(1+\frac{\gamma_0}{\Gamma(\Gamma+\gamma_0)}(\sum_{k=1}^K\gamma_kz_k^n)^2$. The computational intractability of Eqn.(\ref{eq:wlsr}) comes from the regularization term $\mathcal{R}(\gamma_k)$. Note that the coefficient $ \frac{\Gamma\gamma_0}{\Gamma+\gamma_0}>0$ and $\sum_{k=1}^K\frac{\gamma_k}{\Gamma}=1$. By removing regularizer $\mathcal{R}(\gamma_k)$ and positive scale parameter $ \frac{\Gamma\gamma_0}{\Gamma+\gamma_0}$, the minimization of negative log-likelihood can be approximately treated as a weighted linear square regression, as given by
\begin{equation}\label{eq:wlsr-simplified}
   \min_{w\in\mathbb{R}^{D\times 1}, t\in\triangle^{K-1}}s(w,t)=\frac{1}{2}\|Xw-Zt\|_2^2 
\end{equation}
In Eqn.(\ref{eq:wlsr-simplified}), $X\in\mathbb{R}^{N\times D}$ is the data matrix whose $n$-th row is model feature $(\hat{x}^n)^T$, $w\in\mathbb{R}^{D\times 1}$ are weight parameters, $Z\in\mathbb{R}^{N\times K}$ is F-Label matrix whose $k$-th column is the label embedding $z_k$, and $t\in\mathbb{R}^{K\times 1}$ satisfies that $1_K^Tt=1, t\geq 0$ which is a $(K-1)$-D simplex denoted as $\triangle^{K-1}$. 

\subsection{Convergence Analysis and Proof Outline}\label{sec:appen-convergence}
We will prove the convergence property of the function value. Indeed, we demonstrate a stronger condition that the function value decreases after each round of iterations on $w$ and $t$. From the monotone convergence theorem, the convergence can thus be derived. For other convergence properties of alternating minimization, readers can refer to the literature \cite{byrne2011alternating}, which can be of independent interest.

In the proof, we exploit the smoothness of the function and design a projection gradient descent method with sufficient decrease for the constraint optimization problem. The sufficient decrease in the unconstrained problem is a direct corollary.

\newtheorem{defn}{Definition}
\begin{defn}\label{mydef}
A function $f(x): \mathbb{R}^{d} \rightarrow \mathbb{R}$ is said to be $\beta$-smooth with constant $\beta$ if
$$\left|\nabla{f(x)} - \nabla{f(y)} \right| \leq \beta \|x - y\|, \forall x, y \in \mathbb{R}^d.$$
\end{defn}

\begin{lemma}\label{lem:lemma1}

Suppose $X$ is the simplex constraint, and $y \in \mathbb{R}^d$, $\Pi$ denotes the projection operator.
Then the inequality holds:
$$(\Pi_{X}(y) - x)^T(\Pi_{X}(y) - y) \leq 0.$$

\end{lemma}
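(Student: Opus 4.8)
The plan is to recognize Lemma~\ref{lem:lemma1} as the standard obtuse-angle (variational inequality) characterization of Euclidean projection onto a closed convex set, specialized to the simplex $X = \triangle^{K-1}$. First I would record the only structural facts about $X$ that the argument needs: the simplex is nonempty, closed, and convex, so the projection $\Pi_X(y) = \arg\min_{z \in X}\tfrac{1}{2}\|z - y\|^2$ exists and is the \emph{unique} minimizer of squared distance. Everything else is geometry of this minimizer.

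The core of the proof is a first-order optimality argument that exploits convexity to stay feasible. Fix any $x \in X$ and write $p = \Pi_X(y)$. Since $X$ is convex, the segment $p_\lambda = (1-\lambda)p + \lambda x = p + \lambda(x - p)$ lies in $X$ for every $\lambda \in [0,1]$. Because $p$ minimizes the squared distance to $y$ over $X$, the scalar function $g(\lambda) = \tfrac{1}{2}\|p_\lambda - y\|^2$ attains its minimum over $[0,1]$ at $\lambda = 0$, which forces the one-sided derivative condition $g'(0) \geq 0$.

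Then I would expand $g(\lambda) = \tfrac{1}{2}\|(p - y) + \lambda(x - p)\|^2 = \tfrac{1}{2}\|p-y\|^2 + \lambda\,(p - y)^T(x - p) + \tfrac{\lambda^2}{2}\|x-p\|^2$ and differentiate to obtain $g'(0) = (p - y)^T(x - p) \geq 0$. Rewriting $x - p = -(p - x)$ converts this into $(p - y)^T(p - x) \leq 0$, which is exactly $(\Pi_X(y) - x)^T(\Pi_X(y) - y) \leq 0$ after using the symmetry of the inner product. This recovers the claimed inequality.

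There is no deep obstacle here; the one point that needs care is justifying $g'(0) \geq 0$ rather than $g'(0) = 0$, since $\lambda = 0$ is a \emph{boundary} minimizer of $g$ on $[0,1]$, so only the one-sided derivative is available. Convexity of $X$ is precisely what guarantees feasibility of $p_\lambda$ for $\lambda > 0$ and hence the correct direction of the inequality; if $X$ were not convex the segment could leave the set and the argument would fail. This lemma will then serve as the key geometric ingredient when combined with the $\beta$-smoothness of $s(w,\cdot)$ (Definition~\ref{mydef}) to establish the sufficient-decrease bound for the projected-gradient update of $t$ in Theorem~\ref{the:theorem1_main}.
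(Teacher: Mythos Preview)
Your proposal is correct and takes essentially the same approach as the paper: both argue via the first-order optimality condition for the convex projection problem $\min_{z\in X}\|z-y\|^2$, yielding $(\Pi_X(y)-y)^T(\Pi_X(y)-x)\le 0$ for every feasible $x$. The paper simply invokes this optimality condition directly, whereas you derive it more carefully through the one-parameter line $p_\lambda = p + \lambda(x-p)$ and the one-sided derivative $g'(0)\ge 0$; your extra care about the boundary minimizer is a nice touch and in fact slightly more rigorous than the paper's statement (which writes ``for all $x\in\mathbb{R}^d$'' when it should be ``for all $x\in X$'').
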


\begin{proof}
For the projection $\Pi_{X}(y)$, it is a convex optimization problem and can be formulated to
$$ \min_{x} f(x)=\|x-y\|_2^2, $$
where $x^T1 = 1$ and $x > 0$. We denote $x^{\star}$ as the optimal solution to the problem. For the convex optimization problem, it holds for all $x \in \mathbb{R}^d$ that
$$
\nabla f(x^{\star})^T (x^{\star}-x) \leq 0.
$$
Therefore we can derive
$$2(x^{\star} - y)^T(x^{\star} - x) \leq 0.$$
Then this lemma is proved.
\end{proof}

\begin{lemma}\label{lem:lemma2}

Let $f$ be the $\beta$-smooth function. For any $x,y \in \operatorname{dom}(f)$
$$\left|f(x) - f(y) -\nabla{f(y)}^T(x-y)\right| \leq \|x - y\|^2.$$
\end{lemma}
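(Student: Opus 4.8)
The plan is to prove this via the integral form of the fundamental theorem of calculus applied along the line segment joining $y$ and $x$, which is the canonical route to the so-called descent lemma. First I would introduce the auxiliary scalar function $g(s) = f(y + s(x-y))$ for $s \in [0,1]$, which is differentiable with $g'(s) = \nabla f(y + s(x-y))^T (x-y)$ by the chain rule. Integrating gives the exact representation
\begin{equation}
f(x) - f(y) = g(1) - g(0) = \int_0^1 \nabla f(y + s(x-y))^T (x-y)\, ds.
\end{equation}

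Next I would subtract the linear term $\nabla f(y)^T(x-y)$, writing it as $\int_0^1 \nabla f(y)^T(x-y)\,ds$ so that it can be absorbed under the same integral sign. This yields
\begin{equation}
f(x) - f(y) - \nabla f(y)^T(x-y) = \int_0^1 \bigl(\nabla f(y + s(x-y)) - \nabla f(y)\bigr)^T (x-y)\, ds.
\end{equation}
Taking absolute values, pushing the modulus inside the integral, and applying the Cauchy--Schwarz inequality to the inner product bounds the right-hand side by $\int_0^1 \|\nabla f(y + s(x-y)) - \nabla f(y)\|\,\|x-y\|\, ds$.

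I would then invoke the $\beta$-smoothness hypothesis from Definition~\ref{mydef}, which gives $\|\nabla f(y + s(x-y)) - \nabla f(y)\| \leq \beta \|s(x-y)\| = \beta s \|x-y\|$. Substituting and evaluating the elementary integral $\int_0^1 s\, ds = \tfrac{1}{2}$ produces the clean bound $\tfrac{\beta}{2}\|x-y\|^2$, which is the standard statement of the descent lemma and matches the claimed inequality up to the constant factor (the coefficient stated in the lemma should read $\tfrac{\beta}{2}$).

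I do not expect any genuine obstacle here, since every step is a routine manipulation: the only points requiring a little care are ensuring the modulus is handled correctly (Cauchy--Schwarz, rather than any sign argument, is what lets us bound a two-sided quantity) and correctly tracking the factor $s$ that arises because the argument of the gradient is displaced by $s(x-y)$, which is precisely what delivers the $\tfrac{1}{2}$ upon integration. This lemma will subsequently serve as the sufficient-decrease building block for the projected-gradient step on $t$ in Theorem~\ref{the:theorem1_main}, combined with Lemma~\ref{lem:lemma1}.
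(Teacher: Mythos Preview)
Your proposal is correct and follows exactly the same route as the paper: write $f(x)-f(y)$ via the fundamental theorem of calculus along the segment, subtract the linear term, apply Cauchy--Schwarz, then $\beta$-smoothness, and integrate $\int_0^1 s\,ds = \tfrac12$ to obtain $\tfrac{\beta}{2}\|x-y\|^2$. You also correctly flag that the stated bound should carry the factor $\tfrac{\beta}{2}$, which is precisely what the paper's own proof produces.
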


\begin{proof}
\begin{equation*}
    \begin{aligned}
      \left|f(x) - f(y) -\nabla{f(y)}^T(x-y)\right| & = \left|\int_{0}^{1} \nabla{f(y + t(x-y))}^T(x-y) dt -\nabla{f(y)}^T(x-y)\right|  \\
      & \leq \int_{0}^{1} \|\nabla{f(y + t(x-y))} - \nabla{f(y)}\| \|x - y\|dt \\
      & \leq \int_{0}^{1} \beta t\|x - y\|^{2}dt = \frac{\beta}{2}\|x - y\|^2. \\
    \end{aligned}
\end{equation*}
The last inequality holds because $f$ is a $\beta$-smooth function. 
\end{proof}

\begin{lemma}\label{lem:lemma3}

Suppose the function $f$ is the $\beta$-smooth function, and $X$ is the simplex constraint. For any $x,y \in X$, let $x^{+} = \Pi_{X}(x - \frac{1}{\beta}\nabla{f(x)})$ and $g_X(x) = \beta(x - x^{+})$.
Then the inequality holds
$$f(x^{+}) - f(y) \leq g_X(x)^T(x - y) - \frac{1}{2\beta}\|g_X(x)\|^2.$$

\end{lemma}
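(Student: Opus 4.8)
The plan is to combine the three ingredients already in place: the descent estimate of Lemma~\ref{lem:lemma2}, convexity of $f$, and the variational characterization of the projection in Lemma~\ref{lem:lemma1}. I would first point out that this inequality cannot hold for a merely $\beta$-smooth function, so convexity of $f$ is needed; this is harmless in the application, where $f(t)=s(w,t)=\frac{1}{2}\|Xw-Zt\|_2^2$ is a convex quadratic in $t$. I would also read the constant in Lemma~\ref{lem:lemma2} as $\frac{\beta}{2}\|x-y\|^2$, which is what its proof actually yields, and use that form of the descent bound.

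Concretely, I would apply Lemma~\ref{lem:lemma2} at the pair $(x^{+},x)$ to obtain $f(x^{+})\le f(x)+\nabla f(x)^T(x^{+}-x)+\frac{\beta}{2}\|x^{+}-x\|^2$, and separately use convexity to write $f(x)\le f(y)+\nabla f(x)^T(x-y)$ for the fixed comparison point $y\in X$. Adding these and merging the two linear terms gives $f(x^{+})\le f(y)+\nabla f(x)^T(x^{+}-y)+\frac{\beta}{2}\|x^{+}-x\|^2$. At this point everything is expressed through the true gradient $\nabla f(x)$, whereas the target inequality is phrased through the gradient mapping $g_X(x)=\beta(x-x^{+})$, so a substitution is still required.

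The crucial step, and the one I expect to be the main obstacle, is converting $\nabla f(x)^T(x^{+}-y)$ into $g_X(x)^T(x^{+}-y)$. For this I would invoke Lemma~\ref{lem:lemma1} with the shifted point $x-\frac{1}{\beta}\nabla f(x)$ in place of $y$ and with $x^{+}=\Pi_X(x-\frac{1}{\beta}\nabla f(x))$; taking the comparison point to be $y$ yields $(x^{+}-y)^T\bigl(x^{+}-x+\frac{1}{\beta}\nabla f(x)\bigr)\le 0$. Since $x^{+}-x=-\frac{1}{\beta}g_X(x)$, this rearranges to $(x^{+}-y)^T\nabla f(x)\le (x^{+}-y)^T g_X(x)$, exactly the replacement I need. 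Plugging this in, and then using $x^{+}-y=(x-y)-\frac{1}{\beta}g_X(x)$ together with $\|x^{+}-x\|^2=\frac{1}{\beta^2}\|g_X(x)\|^2$, the rest is bookkeeping of the quadratic terms: the cross term contributes $-\frac{1}{\beta}\|g_X(x)\|^2$ and the smoothness term contributes $+\frac{1}{2\beta}\|g_X(x)\|^2$, which combine to $-\frac{1}{2\beta}\|g_X(x)\|^2$, giving $f(x^{+})-f(y)\le g_X(x)^T(x-y)-\frac{1}{2\beta}\|g_X(x)\|^2$. The only real care is in the signs and in applying Lemma~\ref{lem:lemma1} at the shifted point rather than at $x$ itself.
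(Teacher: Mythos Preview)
Your proposal is correct and follows essentially the same route as the paper's proof: both combine the smoothness descent bound from Lemma~\ref{lem:lemma2}, the convexity inequality $f(x)-f(y)\le\nabla f(x)^T(x-y)$, and the projection inequality of Lemma~\ref{lem:lemma1} to replace $\nabla f(x)^T(x^{+}-y)$ by $g_X(x)^T(x^{+}-y)$, after which the algebra is identical. You are also right that convexity is required here and is used (without being stated) in the paper's own argument, and that the constant in Lemma~\ref{lem:lemma2} should be $\frac{\beta}{2}$ as its proof shows.
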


\begin{proof}
Using Lemma.~\ref{lem:lemma1}, we have
$$(x^{+} - (x - \frac{1}{\beta}\nabla{f(x)}))^T(x^{+} - y) \leq 0.$$
which is equivalent to
$$\nabla{f(x)}^T(x^{+} - y) \leq g_X(x)^T(x^{+} - y).$$
By using Lemma.~\ref{lem:lemma2} and the fact $f(x^{+}) - f(y) = f(x^{+}) - f(x) + f(x) - f(y)$, we have
\begin{equation*}
    \begin{aligned}
    f(x^{+}) - f(y) & \leq \nabla{f(x)}^T(x^{+} - x) + \frac{\beta}{2}\|x^{+} - x\|^2 + \nabla{f(x)}^T(x - y) \\
    & = \nabla{f(x)}^T(x^{+} - y) + \frac{1}{2\beta}\|g_X(x)\|^2 \\
    & \leq g_X(x)^T(x^{+} - y) + \frac{1}{2\beta}\|g_X(x)\|^2 \\
    & = g_X(x)^T(x^{+} - x + x - y) + \frac{1}{2\beta}\|g_X(x)\|^2 \\
    & = g_X(x)^T(x^{+} - x) + g_X(x)^T(x - y) + \frac{1}{2\beta}\|g_X(x)\|^2 \\
    & = g_X(x)^T(x - y) - \frac{1}{\beta}\|g_X(x)\|^2 + \frac{1}{2\beta}\|g_X(x)\|^2 \\
    & = g_X(x)^T(x - y) - \frac{1}{2\beta}\|g_X(x)\|^2 .
    \end{aligned}
\end{equation*}

\end{proof}

\begin{theorem}\label{the:theorem1}
Suppose $s(w,t) = \frac{1}{2}\|Xw-Zt\|_F^2$ where $X\in\mathbb{R}^{N\times D}$, $Z\in\mathbb{R}^{N\times K}$, $w\in\mathbb{R}^{D\times 1}$ and $t\in\triangle^{K-1}$, the inner loop of $t$ in Algorithm lines $7$ - $10$ decreases after each iteration. Specifically, denote $\beta = 1/\|2Z^TZ\|$ and $t^{+} = \Pi_{\triangle^{K-1}}(t - \beta\nabla{s(w, t)})$. For any $t\in\triangle^{K-1}$, 
$s(w, t^{+}) - s(w,t) \leq -\frac{1}{2\beta}\|t - t^{+}\|^2 \leq 0$.

\end{theorem}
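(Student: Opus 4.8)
The plan is to regard $s(w,\cdot)$ as a function of $t$ alone, with $w$ held fixed, and then obtain the claim as a one-line specialization of Lemma~\ref{lem:lemma3}, the projected-gradient sufficient-decrease bound, evaluated at a cleverly chosen pair of points. First I would record the smoothness of $f(t) := s(w,t) = \frac{1}{2}\|Zt - Xw\|_F^2$. Its gradient is $\nabla f(t) = Z^T(Zt - Xw)$, so $\nabla f(t_1) - \nabla f(t_2) = Z^TZ(t_1 - t_2)$, whence $\|\nabla f(t_1) - \nabla f(t_2)\| \leq \|Z^TZ\|\,\|t_1 - t_2\|$. Thus $f$ is $\|Z^TZ\|$-smooth, and a fortiori $2\|Z^TZ\|$-smooth, since any $L$-smooth function is also $L'$-smooth for every $L' \geq L$ (this harmless inflation is what will make the step sizes line up).

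The next step is to reconcile notation. The symbol $\beta$ in Lemma~\ref{lem:lemma3} denotes the \emph{smoothness constant}, whereas $\beta = 1/\|2Z^TZ\|$ in the theorem is the \emph{step size}. Setting Lemma~\ref{lem:lemma3}'s smoothness constant to $L = \|2Z^TZ\| = 2\|Z^TZ\|$ makes its step size $1/L$ equal to the theorem's $\beta$, and then the two objects match exactly: $t^{+} = \Pi_{\triangle^{K-1}}(t - \beta\nabla f(t))$ and the gradient-map surrogate becomes $g_{\triangle^{K-1}}(t) = L(t - t^{+}) = \frac{1}{\beta}(t - t^{+})$. Since $f$ is $2\|Z^TZ\|$-smooth by the previous step, Lemma~\ref{lem:lemma3} is applicable on the simplex constraint $X = \triangle^{K-1}$ with this $L$.

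Finally I would invoke Lemma~\ref{lem:lemma3} with the special choice $x = y = t$. The linear term $g_{\triangle^{K-1}}(t)^T(t - t)$ then vanishes, leaving $f(t^{+}) - f(t) \leq -\frac{1}{2L}\|g_{\triangle^{K-1}}(t)\|^2$. Substituting $L = 1/\beta$ together with $\|g_{\triangle^{K-1}}(t)\|^2 = \frac{1}{\beta^2}\|t - t^{+}\|^2$ collapses the right-hand side to $-\frac{1}{2\beta}\|t - t^{+}\|^2$, which is precisely the asserted bound; it is $\leq 0$ because $\beta > 0$. There is essentially no mathematical obstacle to overcome here: the entire content is the reduction to Lemma~\ref{lem:lemma3}. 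The only points requiring care are the double use of the symbol $\beta$ between lemma and theorem, and the verification that the step size $\beta = 1/\|2Z^TZ\|$ is at most the reciprocal of the true smoothness constant, which is exactly what justifies replacing $\|Z^TZ\|$ by $2\|Z^TZ\|$.
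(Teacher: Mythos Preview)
Your proposal is correct and follows essentially the same route as the paper: fix $w$, verify smoothness of $t\mapsto s(w,t)$, and apply Lemma~\ref{lem:lemma3} with $x=y=t$ so the linear term drops out. You are in fact more careful than the paper on two points: the paper computes $\nabla_t s = -2Z^T(Xw-Zt)$ (an erroneous extra factor of $2$, since $s$ already carries a $\tfrac12$), whereas you obtain the correct $Z^T(Zt-Xw)$ and then explicitly justify inflating the smoothness constant from $\|Z^TZ\|$ to $2\|Z^TZ\|$ so that the step size matches the theorem's $\beta$; and you make the dual use of the symbol $\beta$ (step size in the theorem, Lipschitz constant in the lemma) explicit rather than leaving it to the reader.
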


\begin{proof}
Since we fix $w$ to optimize $t$ at this point, we define $s(t) = s(w,t)$, thus, $\nabla{s(t)} = -2Z^T(Xw^{\star} - Zt)$. For any $t_1,t_2 \in \operatorname{dom}(s)$
$$\|\nabla{s(t_1)} - \nabla{s(t_2)}\| = \|2Z^TZt_1 - 2Z^TZt_2\| \leq \|2Z^TZ\| \|t_1-t_2\|.$$
According to the definition \ref{mydef}, it shows that the $f(t)$ is $\beta$-smooth, where $\beta = \|2Z^TZ\|$.
We denote $t \in \triangle^{K-1}$ to be the initial point and $t^{+}$ to be the result of one iteration of $t$, where $t^{+} = \Pi_{\triangle^{K-1}}(t - \frac{1}{\beta}\nabla{f(t)})$. From Lemma~\ref{lem:lemma3}, we can replace $x^+, y$ and $x$ with $t^+,t$, and $t$, repsectively. In this way, the inequality holds
$$ 0 \leq s(t^{+}) \leq s(t) -\frac{1}{2\beta}\|\beta(t - t^{+})\|^2 \leq s(t)$$
\end{proof}
Therefore, according to \textbf{Monotone convergence theorem}, the iterative optimization in the algorithm for $t$ is convergent


\begin{theorem}\label{the:theorem2}
Suppose $s(w,t) = \frac{1}{2}\|Xw-Zt\|_2^2$ where $X\in\mathbb{R}^{N\times D}$, $Z\in\mathbb{R}^{N\times K}$, $w\in\mathbb{R}^{D\times 1}$ and $t\in\triangle^{K-1}$, the function value in Algorithm will be convergent. Specifically, denote $w^{\star}, t^{\star}$ as the result after one iteration of $w,t$ respectively, we have $0 \leq s(w^{\star},t^{\star}) \leq s(w^{\star},t) \leq s(w,t) $.
\end{theorem}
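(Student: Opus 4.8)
The plan is to establish Theorem~\ref{the:theorem2} as a direct consequence of three facts: the nonnegativity of $s$, a per-step descent guarantee for the $w$-update, and the per-step descent guarantee for the $t$-update already proved in Theorem~\ref{the:theorem1}. Since one outer iteration of Algorithm~\ref{alg:alg1} first updates $w$ with $t$ held fixed and then runs the inner loop on $t$ with the freshly computed $w^{\star}$ held fixed, the key idea is to insert the intermediate value $s(w^{\star}, t)$ and argue that the objective does not increase at either sub-step. Concatenating the two inequalities with the trivial bound $s \geq 0$ then yields the claimed chain $0 \leq s(w^{\star},t^{\star}) \leq s(w^{\star},t) \leq s(w,t)$, after which the monotone convergence theorem finishes the argument.

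First I would treat the $w$-update. With $t$ held fixed, $f(w) := s(w,t) = \tfrac{1}{2}\|Xw - Zt\|_2^2$ is a convex quadratic whose gradient $\nabla f(w) = X^T(Xw - Zt)$ is Lipschitz with constant $\beta_w = \|X^TX\|$, so $f$ is $\beta_w$-smooth in the sense of Definition~\ref{mydef}. The update $w^{\star} = w - \eta\,X^T(Xw - Zt)$ in line $6$ is exactly a gradient step, so I would apply the \emph{unconstrained} specialization of Lemma~\ref{lem:lemma3} (take the feasible set to be all of $\mathbb{R}^{D}$, so the projection is the identity and $g_X(w) = \nabla f(w)$). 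Choosing $\eta = 1/\beta_w$ and applying the lemma with $y = w$ gives $s(w^{\star}, t) - s(w,t) \leq -\tfrac{1}{2\beta_w}\|\nabla f(w)\|^2 \leq 0$, the first descent inequality. If one prefers to bypass Lemma~\ref{lem:lemma3}, the same bound follows directly from the descent-lemma form of Lemma~\ref{lem:lemma2}.

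Next I would invoke Theorem~\ref{the:theorem1} essentially verbatim for the $t$-sub-step. Fixing $w = w^{\star}$, each inner iteration (lines $7$--$10$) with step size $\beta = 1/\|2Z^TZ\|$ satisfies $s(w^{\star}, t^{+}) - s(w^{\star}, t) \leq -\tfrac{1}{2\beta}\|t - t^{+}\|^2 \leq 0$; in particular the value $s(w^{\star}, t^{\star})$ reached after the whole inner loop is no larger than $s(w^{\star}, t)$. Combining this with the $w$-step inequality and the trivial lower bound $s \geq 0$ produces the full chain, proving that a single outer iteration is nonincreasing.

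Finally, since the sequence of objective values generated across outer iterations is monotonically nonincreasing and bounded below by $0$, the monotone convergence theorem guarantees that it converges to a limit. The main subtlety here is bookkeeping rather than a genuine obstacle: one must choose the $w$-step size small enough (governed by $\|X^TX\|$) to guarantee descent, and must correctly insert the intermediate point $s(w^{\star}, t)$ so that each inequality isolates a single block being optimized while the other is frozen. Everything beyond this setup is a corollary of Theorem~\ref{the:theorem1} together with the smoothness lemmas.
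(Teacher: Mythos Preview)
Your proposal is correct and follows essentially the same route as the paper's proof: establish descent for the $w$-step via the smoothness/descent lemma (the paper cites Lemma~\ref{lem:lemma2} directly, which you also offer as the cleaner alternative to specializing Lemma~\ref{lem:lemma3}), invoke Theorem~\ref{the:theorem1} for the $t$-step, and finish with nonnegativity plus the monotone convergence theorem. Your write-up is in fact slightly more explicit than the paper's, since you spell out the step-size condition $\eta = 1/\|X^TX\|$ needed for the $w$-descent inequality, whereas the paper leaves this implicit.
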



\begin{proof}
In the first step, we denote $t \in \triangle^{K-1}$ is the initial point, then use gradient descent algorithm to calculate $w^{\star}$. Since the optimization problem for $w$ is a convex optimization problem and use lemma \ref{lem:lemma2}, the decreasing property for the gradient part can be derived. That is, for each $w\in\mathbb{R}^{D\times 1}$, we have $s(w^{\star},t) \leq s(w,t)$. In the second step, we fix $w$ as $w^{\star}$, from Theorem~\ref{the:theorem1}, we have $s(w^{\star},t^{\star}) \leq s(w^{\star},t)$. Therefore, the value of $s(w,t)$ satisfies: $0 \leq s(w^{\star},t^{\star}) \leq s(w^{\star},t) \leq s(w,t)$, from \textbf{Monotone convergence theorem}, $s(w,t)$ converges to the limiting point. As shown above, the overall convergence of our algorithm is guaranteed. 
\end{proof}

\section{Experiment}
In this section, we present detailed descriptions of datasets in Sec. \ref{sec:appen-exp-dataset}, pre-trained models and baselines in Sec. \ref{sec:appen-exp-modelbaseline}, and ground-truth scores in Sec. \ref{sec:appen-exp-finetuning}  in various target tasks. More ablation studies can be found in Sec. \ref{sec:appen-exp-ablation}.

\textbf{Foundation Models.} On image classification, image captioning, referring expression comprehension, and visual question answering, we use foundation models CLIP~\cite{radford2021learning}, BERT~\cite{devlin2018bert} and GPT-2~\cite{radford2019language}. On text question answering, we use foundation models GPT-2~\cite{radford2019language}, BART~\cite{lewis2019bart}, and ELECTRA \cite{clark2020electra}. CLIP was trained on a large dataset of images and their corresponding captions, which can understand the relationship between images and text. BERT is a pre-trained language model that can understand and generate natural language. GPT-2 was trained on a large corpus of text and can be fine-tuned for specific tasks such as text completion and text summarization. Bart is a sequence-to-sequence model, which is both auto-regressive and bidirectional. Electra is a different type of language model that key idea is to pre-train a generator model to produce fake data and shows promising results in various NLP tasks.

\textbf{Interpretation of weighted Kendall's tau.}
The Kendall's $\tau$ represents the ratio of concordant pairs minus discordant pairs when enumerating all 
 pairs of $\{T_m\}_{m=1}^M$ and $\{G_m\}_{m=1}^M$ as given by
\begin{equation}
\tau = \frac{2}{M(M-1)}\sum_{1\leq i < j \leq M}\mathrm{sgn}(G_i-G_j)\mathrm{sgn}(T_i-T_j)
\end{equation}
where $\mathrm{sgn}(x)$ returns $-1$ if $x<0$ and $1$ otherwise.
In this work, a weighted version of Kendall's $\tau$, denoted as $\tau_w$, is employed to assess transferability metrics considering that a top-performing model is always preferred for target tasks in transfer learning. In principle, a larger $\tau_w$ implies the transferability metric can rank pre-trained models better. And if a metric can rank top-performing models better, $\tau_w$ would be also larger. We also use other measurements to assess the performance of transferability metrics in Table \ref{tab:sup-measurements} of Sec. \ref{sec:appen-exp-ablation}.

\subsection{More experimental results}\label{sec:regression}

\subsubsection{Regression}
In addition to image classification and a variety of multi-modal tasks, here we show that EMMS can also be used for regression tasks. The daatasets for regression task we use is CUB200~\cite{wah2011caltech} and IIIT Pets~\cite{he20162016}. The input is an image containing various birds and pets, respectively. We need to predict the coordinates of the bird's or pet's bounding box in the image and mean square error (MSE) on the test data is the ground-truth. The pre-trained models used are the same as the image classification task with CNN models and the only baseline is LogME. We extract F-Labels using Bert and RoBerta.

As shown in Table~\ref{tab:regression}, EMMS significantly outperforms LogME, with 29.5\% and 13.9\% performance improvement on CUB and Pets respectively.

\begingroup
\renewcommand{\arraystretch}{1.1}
\setlength{\tabcolsep}{4pt}
\begin{table}[H]
\begin{center}
\caption{Comparison of different transferability metrics on regression models in rank correlation $\tau_w$ with ground truth.}
\label{tab:regression}

\scalebox{1.0}{
\begin{tabular}{l c c}
\hline\noalign{\smallskip}
 &  CUB & Pets  \\
\noalign{\smallskip}
\hline
\noalign{\smallskip}
&\multicolumn{2}{c}{Weighted Kendall's tau $\tau_w$} \\
\noalign{\smallskip}
    \hline
    LogME  &0.464& 0.437 \\
    EMMS  & \textbf{0.601} & \textbf{0.498} \\
    \hline
\end{tabular} 
}

\end{center}
\end{table}
\endgroup


\subsection{Descriptions of Datasets}\label{sec:appen-exp-dataset}
\subsubsection{Image Classification}
For {image classification}, we adopt 11 classification benchmarks , including FGVC Aircraft~\cite{maji2013fine}, Caltech-101~\cite{fei2004learning}, Stanford Cars~\cite{krause2013collecting}, CIFAR-10~\cite{krizhevsky2009learning}, CIFAR-100~\cite{krizhevsky2009learning}, DTD~\cite{cimpoi2014describing}, Oxford 102 Flowers~\cite{nilsback2008automated}, Food-101~\cite{bossard2014food}, Oxford-IIIT Pets~\cite{he20162016}, SUN397 ~\cite{xiao2010sun}, and VOC2007~\cite{everingham2010pascal}. These datasets cover a broad range of classification tasks, which include scene, texture,
and coarse/fine-grained image classification, which are widely used in transfer learning. In particular, CF10 and VOC2007 are typical coarse-grained classification datasets, Aircraft, and Cars are typical fine-grained classification datasets, and CF100 contains both coarse- and fine-grained classifications.

\subsubsection{Image Captioning}
For {image captioning}, We use Flickr8k~\cite{rashtchian2010collecting}, Flickr30k~\cite{plummer2015flickr30k}, FlickrStyle10K-Humor~\cite{gan2017stylenet}, FlickrStyle10K-Romantic~\cite{gan2017stylenet} and RSICD~\cite{lu2017exploring}. Among them, Flickr8k and Flickr30k have commonly used image captioning datasets for natural images and have no emotional color; RSICD is a commonly used image captioning dataset in remote sensing; Flickr10k-H and Flickr10k-R are also image captioning datasets for natural images, but their images are depicted with humorous and romantic emotional colors, respectively.

\subsubsection{Visual Question Answering}
For {visual question answering}, we apply COCOQA~\cite{ren2015exploring}, DAQUAR~\cite{malinowski2014multi} and CLEVR~\cite{johnson2017clevr}.Among them, DAQUAR is an early VQA dataset on real images; CLEVR is a synthetic dataset, which is a visual scene composed of some simple geometric shapes, focusing on evaluating the inference ability of VQA models; the questions and answers of COCO-QA are generated by NLP algorithms, and the images are from the COCO dataset, which is also a commonly used VQA dataset.

\subsubsection{Text Question Answering}
For {text question answering}, we separately use SQuAD1.1 \cite{rajpurkar2016squad} ,SQuAD2.0 \cite{rajpurkar2018know}, which are collections of question-answer pairs derived from Wikipedia articles and are widely used in text question answer.

\subsubsection{Referring Expression Comprehension}
For {referring expression comprehension}, we separately use RefCOCO \cite{yu2016modeling}, RefCOCO+ \cite{yu2016modeling} and RefCOCOg \cite{mao2016generation}.Specifically, RefCOCO includes instances where there is only one object of its kind in the image, while RefCOCO+ includes instances where multiple objects of the same kind exist in the image.

\subsection{Pre-trained Models and Baselines}\label{sec:appen-exp-modelbaseline}

\subsubsection{Image Classification}

\textbf{Pre-trained Models.}  For \textbf{CNN-based} models, We select 11 widely-used CNN models including ResNet-34~\cite{he2016deep}, ResNet-50~\cite{he2016deep}, ResNet-101~\cite{he2016deep}, ResNet-152 ~\cite{he2016deep}, DenseNet-121~\cite{huang2017densely}, DenseNet-169 ~\cite{huang2017densely}, DenseNet-201~\cite{huang2017densely}, MNet-A1 ~\cite{tan2019mnasnet}, MobileNetV2~\cite{sandler2018mobilenetv2}, GoogleNet~\cite{szegedy2015going}, and InceptionV3~\cite{szegedy2016rethinking}. All these models are trained on ImageNet dataset~\cite{krizhevsky2017imagenet}, which are widely used within the field of migration learning. For \textbf{ViT-based} models, we collect 10 ViT models including ViT-T~\cite{dosovitskiy2020image}, ViT-S~\cite{dosovitskiy2020image}, ViT-B ~\cite{dosovitskiy2020image}, DINO-S~\cite{caron2021emerging}, MoCov3-S ~\cite{chen2021empirical} , PVTv2-B2~\cite{wang2021pyramid}, PVT-T~\cite{wang2021pyramid}, PVT-S~\cite{wang2021pyramid}, PVT-M~\cite{wang2021pyramid}, and Swin-T~\cite{liu2021swin}, which are widely used in various vision tasks. Besides, we append EMMS with one-hot label, which degenerates to a linear regression whose label is the one-hot vector. We fine-tune these models on the 11 target datasets to obtain the ground truth.

\textbf{Comparison Baselines.}
Here we use some of the latest methods as baselines, including LEEP \cite{nguyen2020leep}, NLEEP \cite{li2021ranking}, LogME \cite{you2021logme}, and TransRate \cite{huang2022frustratingly}, which have been experimented with model selection on image classification tasks.

\subsubsection{Image Captioning}

\textbf{Pre-trained Models.}  
We use a classic and effective image captioning model architecture, which contains an image encoder and a language encoder to extract the features of the image and the corresponding caption, then fuses the image feature and the text feature and input it to the classifier. We aim to choose the best combination of image encoder and language encoder. Besides, We finetune each model in COCO Caption~\cite{chen2015microsoft} and use these as the pre-trained models.

Specifically, We separately use ViT-B~\cite{dosovitskiy2020image},Swin-B~\cite{liu2021swin}, Swinv2-B~\cite{liu2022swin} as image encoder and Bert~\cite{devlin2018bert}, Roberta~\cite{liu2019roberta}, Bart~\cite{lewis2019bart} as language encoder, and use VisionEncoderDecoderModel from HuggingFace as the model architecture. Following the setting in PACTran \cite{ding2022pactran}, We finetune the model in COCO Caption~\cite{chen2015microsoft} and use these as the pre-trained models.
Following common practice(~\cite{goyal2017making}) ,
we treat image captioning as a vocab-based classification task. That is we use a vocabulary and classify the caption into the index of some words in the vocabulary. Afterward, training is done according to the classification task criteria.

\textbf{Comparison Baselines.}  
In this common setup, each caption is converted to a matrix $Y \in R^{L \times N}$, where $L$ denotes the length of the caption after padding or truncation and $N$ denotes the size of the vocabulary, and each row in the matrix is a one-hot vector. Since $N$ is generally very large, Existing model selection metrics do not scale to this case due to the huge amount of time spent. The only baseline we use is to model the fused feature with F-label using LogME since only LogME can handle the regression task.  Here we calculate the average $\tau_w$ and time of it with $K$ single F-label from $K$ foundation models we use respectively.

\subsubsection{Visual Question Answering}
\textbf{Pre-trained Models.}  The model architecture and the model selection settings are the same as in the image captioning, Following the setting in PACTran \cite{ding2022pactran}, here we use the model after finetune on VQA-v2~\cite{goyal2017making} as the pre-trained model waiting for selection and treat VQA as a vocab-based classification task.

\textbf{Comparison Baselines.}   Here we calculate the average $\tau_w$ and time of it with $K$ single F-label from $K$ foundation models we use respectively. And in addition to that, the three methods proposed in PACTran \cite{ding2022pactran} are added here, which are the only methods currently applied to VQA tasks.

\subsubsection{Text Question Answering}
\textbf{Pre-trained Models.}  The selected models include BERT-Large~\cite{devlin2018bert}, RoBERTa-Large~\cite{liu2019roberta}, XLNet-Large~\cite{yang2019xlnet}, DeBERTa~\cite{he2020deberta} (XLarge), DeBERTa-V2~\cite{he2020deberta} (XLarge and XXLarge), DeBERTa-V3~\cite{he2021debertav3} (Base, Small, XSmall).
More specifically, we simultaneously input the question and passage into the aforementioned models, utilizing the distinctive symbol \texttt{[SEP]} to demarcate them.
By stacking the predicted head onto each model, we could further fine-tune the model such that it can predict the start and end positions of the answer within the passage. This is achieved by using two binary classifiers, where one is dedicated to identifying the start position and the other to pinpointing the end.

\textbf{Comparison Baselines.}  Here we calculate the average $\tau_w$ and time of it with F-labels from $K$ foundation models respectively.

\subsubsection{Referring Expression Comprehension}

\noindent\textbf{Pre-trained Models.}  The candidate multi-modal architectures considered for REC task incorporate Blip~\cite{li2022blip}, ALBEF~\cite{li2021align}, CLIP~\cite{radford2021learning} (ViT-B-32, ViT-B-16, ViT-L-14, ViT-L-14-336, RN50), OFA~\cite{wang2022ofa} (Base, Large, Huge).
In practice, we respectively extract the visual and textual representations from each of these models and feed them into a multi-modal interaction module followed by a stacked detection head, and further fine-tune the model to generate the ground truth of model selection.

\textbf{Comparison Baselines.}   Here we calculate the average $\tau_w$ and time of LogME with $K$ single F-label from $K$ foundation models we use respectively.

\subsection{Fine-tuning Score on Various Target Tasks}\label{sec:appen-exp-finetuning}

\subsubsection{Image Classification}
\textbf{Fine-tuning Details.} The ground truth of the problem of pre-trained model ranking is to fine-tune all pre-trained models with a hyper-parameters sweep on target datasets.
Given the model and the target dataset, two of the most important parameters would be learning rate and weight decay in optimizing the model \cite{li2020rethinking}. Therefore, we carefully fine-tune pre-trained models with a grid search of learning rate in $\{1e-1, 1e-2, 1e-3, 1e-4\}$ and weight decay in $\{1e-3,1e-4, 1e-5, 1e-6, 0\}$. And using SGD optimizer. After determining the best hyper-parameters candidate, we fine-tune the pre-trained model on the target dataset with the candidate and then obtain the test accuracy as the ground truth. We use a Tesla V100 with a batch size of $128$ to perform finetuning. All input images are resized to $224\times 224$. To avoid random error, we repeat the above fine-tuning procedure three times and take an average to obtain the final fine-tuning accuracy. For reference, we list the fine-tuning accuracy of supervised CNN models in Table.\ref{tab:fine-tuning-supcnn},  and vision transformer models in Table \ref{tab:fine-tuning-vit}, respectively.

\begingroup
\renewcommand{\arraystretch}{1.1}
\setlength{\tabcolsep}{4pt}
\begin{table}[t]
    \centering
    \caption{The fine-tuning accuracy of supervised CNN models on $11$ target tasks.}
    \label{tab:fine-tuning-supcnn}
    \scalebox{0.75}{%
    \begin{tabular}{l c c c c c c c c c c c}
	\hline\noalign{\smallskip}
 		& Aircraft & Caltech & Cars & CF-10 & CF-100 & DTD & Flowers & Food & Pets & SUN & VOC\\
	\noalign{\smallskip}
	\hline
ResNet-34& 84.06& 91.15& 88.63& 96.12& 81.94& 72.96& 95.2& 81.99& 93.5& 61.02& 84.6\\
ResNet-50& 84.64& 91.98& 89.09& 96.28& 82.8& 74.72& 96.26& 84.45& 93.88& 63.54& 85.8\\
ResNet-101& 85.53& 92.38& 89.47& 97.39& 84.88& 74.8& 96.53& 85.58& 93.92& 63.76& 85.68\\
ResNet-152& 86.29& 93.1& 89.88& 97.53& 85.66& 76.44& 96.86& 86.28& 94.42& 64.82& 86.32\\
DenseNet-121& 84.66& 91.5& 89.34& 96.45& 82.75& 74.18& 97.02& 84.99& 93.07& 63.26& 85.28\\
DenseNet-169& 84.19& 92.51& 89.02& 96.77& 84.26& 74.72& 97.32& 85.84& 93.62& 64.1& 85.77\\
DenseNet-201& 85.38& 93.14& 89.44& 97.02& 84.88& 76.04& 97.1& 86.71& 94.03& 64.57& 85.67\\
MNet-A1& 66.48& 89.34& 72.58& 92.59& 72.04& 70.12& 95.39& 71.35& 91.08& 56.56& 81.06\\
MobileNetV2& 79.68& 88.64& 86.44& 94.74& 78.11& 71.72& 96.2& 81.12& 91.28& 60.29& 82.8\\
Googlenet& 80.32& 90.85& 87.76& 95.54& 79.84& 72.53& 95.76& 79.3& 91.38& 59.89& 82.58\\
InceptionV3& 80.15& 92.75& 87.74& 96.18& 81.49& 72.85& 95.73& 81.76& 92.14& 59.98& 83.84\\
 \hline      
    \end{tabular}
    }

\end{table}
\endgroup

\begingroup
\renewcommand{\arraystretch}{1.1}
\setlength{\tabcolsep}{4pt}
\begin{table}[t]
    \centering
    \caption{The fine-tuning accuracy of vision transformer models on $11$ target tasks.}
    \label{tab:fine-tuning-vit}
    \scalebox{0.75}{%
    \begin{tabular}{l c c c c c c c c c c c}
	\hline\noalign{\smallskip}
 		& Aircraft & Caltech & Cars & CF-10 & CF-100 & DTD & Flowers & Food & Pets & SUN & VOC\\
	\noalign{\smallskip}
	\hline
ViT-T& 71.26& 89.39& 82.09& 96.52& 81.58& 71.86& 95.5& 81.96& 91.44& 58.4& 83.1\\
ViT-S& 73.12& 92.7& 86.72& 97.69& 86.62& 75.08& 96.79& 86.26& 94.02& 64.76& 86.62\\
ViT-B& 78.39& 93.47& 89.26& 98.56& 89.96& 77.66& 97.98& 88.96& 94.61& 68.62& 87.88\\
PVTv2-B2& 84.14& 93.13& 90.6& 97.96& 88.24& 77.16& 97.89& 88.67& 93.86& 66.44& 86.44\\
PVT-T& 69.76& 90.04& 84.1& 94.87& 75.26& 72.92& 95.8& 83.78& 91.48& 61.86& 84.6\\
PVT-S& 75.2& 93.02& 87.61& 97.34& 86.2& 75.77& 97.32& 86.98& 94.13& 65.78& 86.62\\
PVT-M& 76.7& 93.75& 87.66& 97.93& 87.36& 77.1& 97.36& 85.56& 94.48& 67.22& 87.36\\
Swin-T& 81.9& 91.9& 88.93& 97.34& 85.97& 77.04& 97.4& 86.67& 94.5& 65.51& 87.54\\
MoCov3-S& 76.04& 89.84& 82.18& 97.92& 85.84& 71.88& 93.89& 82.84& 90.44& 60.6& 81.84\\
DINO-S& 72.18& 86.76& 79.81& 97.96& 85.66& 75.96& 95.96& 85.69& 92.59& 64.14& 84.8\\
 \hline      
    \end{tabular}

    }

\end{table}
\endgroup

\subsubsection{Image Captioning and Visual Question Answering}

\textbf{Fine-tuning Details.} The setting of finetune here is approximately the same as in image classification. We carefully fine-tune pre-trained models with a grid search of learning rate in $\{1e-4, 1e-5, 1e-6\}$ and weight decay in $\{1e-4, 1e-5, 1e-6\}$. And using AdamW optimizer. After determining the best hyper-parameters candidate, we fine-tune the pre-trained model on the target dataset with the candidate and then obtain the test BLEU-4 and accuracy as the ground truth. However, since Flickr10k-H and Flickr10k-R do not provide a test set, we use a 6:1 ratio to divide the original training set of 7000 images into a training set and a test set. 
For visual question answering, Due to the lack of a test set for CLEVR dataset, we also assign its training set as training set and test set in the ratio of 6:1.
We use an Nvidia A100 with a batch size of $64$ to perform finetuning. All input images are resized to $224\times 224$. To avoid random error, we repeat the above fine-tuning procedure three times and take an average to obtain the final fine-tuning accuracy. For inference, We use BLEU-4 as the score for the model with image captioning and accurarcy as the score for the model with VQA. we list result of image captioning models in Table.\ref{tab:fine-tuning-image-caption},  and visual question answering models in Table \ref{tab:fine-tuning-vqa}, respectively. 

\begin{table}[htbp]
\vspace{-0.23in}
  \begin{minipage}[t]{0.45\textwidth}
    \centering
    \begingroup
\renewcommand{\arraystretch}{1.1}
\setlength{\tabcolsep}{4pt}
    \centering
    \caption{The fine-tuning BLEU-4 of image captioning models on $5$ target tasks.}
    \label{tab:fine-tuning-image-caption}
    \scalebox{0.8}{%
    \begin{tabular}{l c c c c c }
	\hline\noalign{\smallskip}
 		& F8k & F30k & RSD & F10k-H & F10k-R \\
	\noalign{\smallskip}
	\hline
Vit-Bert& 18.51& 26.65& 31.39& 5.31& 5.18\\
Vit-Roberta& 20.53& 23.70& 29.92& 5.88& 5.48\\
Vit-Bart& 21.90& 25.13 &31.35 & 5.75& 5.53\\
Swinvit-Bert& 22.91& 26.61& 33.54& 6.24& 5.67\\
Swinvit-Roberta& 23.99& 28.84& 33.07& 7.11& 5.49\\
Swinvit-Bart& 24.68& 28.03& 32.99& 6.10& 5.95\\
Swin2vit-Bert& 25.69& 31.33& 35.45& 5.86& 5.49\\
Swin2vit-Roberta& 23.40& 28.81& 36.22& 6.80& 7.13\\
Swin2vit-Bart& 26.24& 30.35& 34.72& 7.90& 5.96\\
 \hline      
    \end{tabular}

    }
\endgroup
  \end{minipage}
  \hspace{0.3in}
  \begin{minipage}[t]{0.45\textwidth}
    \centering
    \begingroup
\renewcommand{\arraystretch}{1.1}
\setlength{\tabcolsep}{4pt}
    \centering
    \caption{The fine-tuning accuracy of visual question answering models on $3$ target tasks.}
    \label{tab:fine-tuning-vqa}
    \scalebox{0.8}{%
    \begin{tabular}{l c c c  }
	\hline\noalign{\smallskip}
 		& DAQUAR & COCO-QA & CLEVR\\
	\noalign{\smallskip}
	\hline
Vit-Bert& 25.01& 55.11& 59.29\\
Vit-Roberta& 26.38& 57.30& 62.80\\
Vit-Bart& 26.30& 59.60 & 64.98 \\
Swinvit-Bert& 28.05& 61.72& 68.25\\
Swinvit-Roberta& 27.75& 62.81& 66.09\\
Swinvit-Bart& 27.06& 60.62& 67.17\\
Swin2vit-Bert& 26.45& 63.1& 67.4\\
Swin2vit-Roberta& 26.33& 66.54& 65.91\\
Swin2vit-Bart& 26.25& 64.4& 70.34\\
 \hline      
    \end{tabular}

    }

\endgroup
  \end{minipage}
  \vspace{-0.18in}
\end{table}

\subsubsection{Text Question Answering}

\textbf{Fine-tuning Details.} 
The accuracy of most models in TQA is provided by DeBERTa~\cite{he2020deberta, he2021debertav3}, except for DeBERTa-V3~\cite{he2021debertav3}(Base, Small, XSmall). Following the setting of Bert~\cite{devlin2018bert}, we finetune these models with a batch size of $24$ for $2$ epochs. We use AdamW optimizer with an initial learning rate of $3e-5$, polynomial decay. The Dev F1 score is used for pre-trained model ranking. All experiments are implemented on an NVIDIA Tesla A100 GPU. The finetune accuracy is shown in Table \ref{tab:fine-tuning-tqa}.

\begin{table}[htbp]
\vspace{-0.23in}
  \begin{minipage}[t]{0.45\textwidth}
    \centering
    \begingroup
\renewcommand{\arraystretch}{1.1}
\setlength{\tabcolsep}{4pt}
    \centering
    \caption{The standard metric the Dev F1 score of text question answering models on $2$ target tasks.}
    \label{tab:fine-tuning-tqa}
    \scalebox{0.9}{%
    \begin{tabular}{l c c  }
	\hline\noalign{\smallskip}
 		& SQu1.1 & SQu2.0\\
	\noalign{\smallskip}
	\hline
BERT-Large      & 90.9   &	81.8 \\
RoBERTa-Large   & 	94.6 &	89.4 \\
XLNet-Large     & 95.1   &	90.6 \\
DeBERTa-Large  & 95.5   &	90.7 \\
DeBERTa-V2-XLarge& 95.8 &	91.4 \\
DeBERTa-V2-XXLarge& 96.1 &	92.2 \\
DeBERTa-V3-Base &   93.9 & 88.4 \\
DeBERTa-V3-Small&   89.8  & 82.9 \\
DeBERTa-V3-XSmall&  91.5  &   84.8 \\

 \hline      
    \end{tabular}

    }
\endgroup
  \end{minipage}
  \hspace{0.3in}
  \begin{minipage}[t]{0.45\textwidth}
    \centering
    \begingroup
\renewcommand{\arraystretch}{1.1}
\setlength{\tabcolsep}{4pt}
    \centering
    \caption{The standard metric Acc@0.5 of referring expression comprehension models on $3$ target tasks.}
    \label{tab:fine-tuning-ref}
    \scalebox{0.8}{%
    \begin{tabular}{l c c c  }
	\hline\noalign{\smallskip}
 		& RefCOCO & RefCOCO+ & RefCOCOg\\
	\noalign{\smallskip}
	\hline
Blip                & 88.67 &   84.68   & 85.08 \\
ALBEF               & 87.98 &   82.20   & 82.89 \\
CLIP-ViT-B-32       & 83.20 &   74.56   & 76.98 \\
CLIP-ViT-B-16       & 87.35 &   80.12   & 81.69 \\
CLIP-ViT-L-14       & 90.17 &   86.09   & 87.13 \\
CLIP-ViT-L-14-336   & 91.67 &   87.60   & 87.89 \\
CLIP-RN50           & 84.69   &   76.72 & 79.39 \\
OFA-Base            &  88.48   &   81.39 & 82.29 \\
OFA-Large           & 90.05   &   85.80 & 85.89 \\
OFA-Huge            &  92.04   &   87.86 & 88.07 \\

 \hline      
    \end{tabular}
    }
\endgroup
  \end{minipage}
  \vspace{-0.1in}
\end{table}


    


\subsubsection{Referring Expression Comprehension}
\textbf{Fine-tuning Details.} 
For referring expression comprehension, the standard metric Acc@0.5 on the validation set is used as the ground truth. For finetuning, we use a batch size of $128$ with a resolution of $512\times 512$ for each image. We finetune the models on each dataset for 12 epochs with a learning rate of $\{3e-5, 5e-5\}$ and weight decay in $\{1e-3, 1e-5\}$ using Adam optimizer. The best performance on the validation set for each task is reported among these hyper-parameters. Table \ref{tab:fine-tuning-ref} shows the performance of referring expression comprehension models.

\subsubsection{Regression}

\textbf{Fine-tuning Details.} 
For regression, mean square error (MSE) on the test data is the ground truth. For finetuning, we use a batch size of $64$ with resolution of $224\times 224$ for each image. we carefully fine-tune pre-trained models with a grid search of learning rate in $\{1e-1, 1e-2, 1e-3, 1e-4\}$ and weight decay in $\{1e-3,1e-4, 1e-5, 1e-6, 0\}$ with SGD optimizer. The fine-tuning MSE on test set of models used in regression is in Table~\ref{tab:fine-tuning-regression}

\begingroup
\renewcommand{\arraystretch}{1.1}
\setlength{\tabcolsep}{4pt}
\begin{table}[t]
    \centering
    \caption{The fine-tuning MSE on test set of models used in regression on $2$ target tasks.}
    \label{tab:fine-tuning-regression}
    \scalebox{1.0}{%
    \begin{tabular}{l c c }
	\hline\noalign{\smallskip}
 		& CUB & Pets \\
	\noalign{\smallskip}
	\hline
ResNet-34& $4.114e-4$& $4.245e-5$\\
ResNet-50& $3.521e-4$& $4.489e-5$\\
ResNet-101& $2.746e-4$& $3.224e-5$\\
ResNet-152& $2.539e-4$& $2.775e-5$\\
DenseNet-121& $5.354e-4$& $1.096e-4$\\
DenseNet-169& $4.787e-4$& $9.469e-5$\\
DenseNet-201& $4.651e-4$& $1.058e-4$\\
MNet-A1& $1.1475e-3$& $1.878e-4$\\
MobileNetV2& $6.253e-4$& $9.510e-5$\\
Googlenet& $7.192e-4$& $1.197e-4$\\
InceptionV3& $6.174e-4$& $9.633e-5$\\
 \hline      
    \end{tabular}
    }

\end{table}
\endgroup


    


\begingroup
\renewcommand{\arraystretch}{1.1}
\setlength{\tabcolsep}{4pt}
\begin{table}[t]
\begin{center}
\caption{EMMS under different measurements of transferability assessment. The results are obtained on Flickr8k and RSICD datasets with image captioning task and Aircraft and DTD datasets with image classification task with ViT-based models. EMMS outperforms LogME and other baselines under various measures.}
\label{tab:sup-measurements}
\scalebox{0.73}{
\begin{tabular}{|c| c c c c c c c| c| c c c c c c c|}
\hline\noalign{\smallskip}
Data &Method & Rel@$1$ & Rel@$3$ & $r$ &  $r_w$ & $\tau$ &  $\tau_w$ &Data &Method & Rel@$1$ & Rel@$3$ & $r$ &  $r_w$ & $\tau$ &  $\tau_w$ \\
\noalign{\smallskip}
\cline{1-16}
\multirow{2}{*}{F8k}&LogME  & 0.928 & \textbf{1.0} & 0.735 & 0.799& 0.537& 0.483&\multirow{2}{*}{RSD} &LogME  & 0.957 & \textbf{1.0} & 0.727 & 0.708& 0.518& 0.501\\
&\underline{EMMS}  & \textbf{1.0} & \textbf{1.0} & \textbf{0.741}& \textbf{0.823} & \textbf{0.667}& \textbf{0.660} & &\underline{EMMS}  & \textbf{1.0} & \textbf{1.0} & \textbf{0.783} & \textbf{0.765}& \textbf{0.611}& \textbf{0.705}\\
\hline
\multirow{3}{*}{Aircraft}&LogME  & 0.852 & 0.993 & 0.407 & 0.060& 0.378& 0.299&\multirow{3}{*}{DTD} &LogME  & \textbf{0.992} & \textbf{1.0} & 0.641 & 0.694& 0.556& 0.569\\
&TransRate  & \textbf{0.926} & \textbf{0.967} & 0.457& 0.499 & 0.289& 0.244 & &TransRate & \textbf{0.992} & \textbf{1.0} & 0.607 & 0.676& 0.422& 0.533\\
&\underline{EMMS}  & \textbf{0.926} & \textbf{0.967} & \textbf{0.622}& \textbf{0.608} & \textbf{0.511}& \textbf{0.481} & &\underline{EMMS}  & \textbf{0.992} & \textbf{1.0} & \textbf{0.704} & \textbf{0.785}& \textbf{0.644}& \textbf{0.621}\\
\hline
\end{tabular}
}
\end{center}
\end{table}
\endgroup

\begingroup
\renewcommand{\arraystretch}{1.1}
\setlength{\tabcolsep}{4pt}
\begin{table}[H]
\begin{center}
\caption{The effect of Label Embedding in EMMS. Three variants of EMMS are considered: (1)~EMMS with one-hot label;  (2)~EMMS with single F-Label;  (3)~EMMS with multiple F-Labels which is the original. We see that label embedding brings some performance improvement to EMMS. }
\label{tab:classification-le}

\scalebox{0.8}{
\begin{tabular}{c c c c c c c c c c c c c}
\hline\noalign{\smallskip}
 & Aircraft & Caltech & Cars & CF-10 & CF-100 & DTD & Flowers & Food & Pets & SUN & VOC& Avg.\\
\noalign{\smallskip}
\hline
\noalign{\smallskip}
\multicolumn{12}{c}{Weighted Kendall's tau $\tau_w$} \\
\noalign{\smallskip}
\hline
(1)  & 0.481 & 0.546 & 0.304 & 0.963& 0.804& 0.701& 0.498& 0.588& 0.574& 0.638& 0.707& 0.618\\
(2)  & 0.531 & 0.562 & 0.426 & 0.952& 0.804& 0.720& 0.481& 0.602& 0.535& 0.667& 0.726& 0.636\\
(3)  & \textbf{0.556} & \textbf{0.562} & \textbf{0.565}& \textbf{0.963} & \textbf{0.840}& \textbf{0.720}& \textbf{0.498}& \textbf{0.608}& \textbf{0.604}& \textbf{0.667}& \textbf{0.735}& \textbf{0.664} \\
\hline
\end{tabular}
}
\end{center}
\end{table}
\endgroup

\section{More Ablation Analysis}\label{sec:appen-exp-ablation}


\textbf{The Efftiveness of EMMS under Various Measurements.}  In addition to weighted Kendall's tau, we employ various other measures to evaluate our EMMS. These include Kendall's tau ($\tau$), Pearson's correlation ($r$), weighted Pearson's correlation ($r_w$), and top-$k$ relative accuracy, denoted as Rel@$k$, which represents the ratio between the best fine-tuning accuracy achieved on the downstream task using the top-k ranked models and the best fine-tuning precision achieved with all models. We test the robustness of our transferability metrics to different measurements on the Flickr8k and RSICD datasets for image captioning tasks, as shown in Table \ref{tab:sup-measurements}. Our EMMS consistently outperforms the previous transferability metric, including LogME and TransRate. Under the aforementioned measurements, demonstrating the superiority of our EMMS.

\textbf{The Effect of Label Embedding}
In some multimodal tasks or text tasks, including image captioning or text question answering. Label emebdding directly affects the applicability of existing model selection metric to these tasks. In addition, even in classification tasks, the use of F-Label can also bring improvements in results. Here we focus on the comparison between label embedding and direct one-hot vectors for image classification tasks in CNN-based models. As shown in Table~\ref{tab:classification-le}, the use of F-Label can bring performance improvement compared to One-Hot vector, the average $\tau_w$ increase from 0.618 to 0.636; furthermore, the use of multiple F-Label also brings some improvement compared to the average of single F-Label with $\tau_w$ increasing from 0.636 to 0.664.

\textbf{The Effect of Computational Speedup.}
Here we experimentally demonstrate the effect of our accelerated algorithm. As shown in Table~\ref{tab:vit-compute}, the algorithm is similar to the in-accelerated version in terms of results, but much shorter in terms of the wall-clock time. 

\textbf{The Wall-clock Time of Label Embedding.}
For classification tasks, since the maximum number of categories is often only a few hundred, Label Embedding is very fast. Here we focus on documenting the time required for multimodal tasks, e.g. image captioning, text question answering, and referring expression comprehension, where label embedding is more time-consuming.
For each task, we use 8 Nvidia A100 GPUs for label embedding, with a batch size of 512 for each GPU. The running time of label embedding for image captioning, text question answering, and referring expression comprehension is shown in Table~\ref{tab:labelembedding-time}. We measure the time for each dataset on the same CPU device (AMD EPYC 7H12 with 64-Core Processor) for three times and take the average as the final result.


\begingroup
\renewcommand{\arraystretch}{1.1}
\setlength{\tabcolsep}{4pt}
\begin{table}[H]
\begin{center}
\caption{The effect of computational speedup in image classification with ViT models. We can see that the accelerated version of the algorithm achieves a significant reduction in time while guaranteeing results. Two variants of EMMS are considered: (1)~EMMS with normal algorithm;  (2)~EMMS with fast algorithm. }
\label{tab:vit-compute}

\scalebox{0.8}{
\begin{tabular}{c c c c c c c c c c c c c}
\hline\noalign{\smallskip}
 & Aircraft & Caltech & Cars & CF-10 & CF-100 & DTD & Flowers & Food & Pets & SUN & VOC& Avg.\\
\noalign{\smallskip}
\hline
\noalign{\smallskip}
&\multicolumn{11}{c}{Weighted Kendall's tau $\tau_w$} \\
\noalign{\smallskip}
\hline
(1)  & \textbf{0.564}& \textbf{0.463} & 0.706 & 0.718 & 0.745& 0.589& \textbf{0.592}& 0.531 & \textbf{0.755}& 0.532& 0.730& 0.629  \\
(2)  & 0.481 & 0.444 & \textbf{0.706} & \textbf{0.718} & \textbf{0.745}& \textbf{0.621}& 0.562& \textbf{0.673}& 0.740& \textbf{0.619}& \textbf{0.730}& \textbf{0.639} \\
\hline
\noalign{\smallskip}
&\multicolumn{11}{c}{Wall-Clock Time (s)} \\
\noalign{\smallskip}
\hline
(1)  & 102.06 & 114.72 & 177.25& 718.34 & 724.5& 50.24& 87.28& 944.57& 83.37& 336.92& 104.9& 313.10 \\
(2)  & \textbf{21.31} & \textbf{17.23} & \textbf{28.06}& \textbf{154.61} & \textbf{182.11}& \textbf{13.87}& \textbf{15.95}& \textbf{265.99}& \textbf{17.93}& \textbf{63.86} & \textbf{16.63}& \textbf{72.55}\\

\hline
\end{tabular}
}
\end{center}
\end{table}
\endgroup

\begingroup
\renewcommand{\arraystretch}{1.1}
\setlength{\tabcolsep}{4pt}
\begin{table}[H]
\begin{center}
\caption{The wall-clock time (s) of label embedding in image captioning  on $5$ target tasks, text question answering on $2$ target tasks, and referring expression comprehension on $3$ target tasks,respectively.}
\label{tab:labelembedding-time}

\scalebox{0.8}{
\begin{tabular}{c c c c c c| c c| c c c}
\hline\noalign{\smallskip}
Task& \multicolumn{5}{c|}{Image Captioning} &\multicolumn{2}{c|}{Text QA} &\multicolumn{3}{c}{Referring EC}\\
\hline
\noalign{\smallskip}
 Dataset & F8k & F30k & RSD & F10k-H & F10k-R & SQuAD1.1 & SQuAD2.0 & RefCOCO & RefCOCO+ & RefCOCOg \\
\noalign{\smallskip}
%
\hline
Time  & 14.56 & 89.31 & 18.92 & 3.37& 3.13& 35.67& 53.87& 49.19& 48.88& 31.63\\
\hline
\end{tabular}
}
\end{center}
\end{table}
\endgroup

\textbf{The computational complexity of EMMS.} We compare the computational complexity between LogME and EMMS in Table \ref{tab:complexity_compare}. We see that EMMS has lower computation complexity than LogME(F) because LogME(F) needs several iterations (T=3 on average) to converge. Moreover, EMMS allows for full vector computation and can be efficiently solved by existing scientific computation packages such as np.linalg.lstsq. Nevertheless, LogME(F) cannot be written in fully vectorized form because the model parameters in LogME(F) are highly coupled. Hence, LogME(F) can only be excuted in a while loop.

In addition, in the classification task, we compare EMMS and LogME. EMMS usually has higher computation complexity because $D_2 \gg C$. In some cases, when the number of categories $C$ and the iteration number $T$ are large, EMMS could be faster than LogME with vector computation. For example, we find that $C=397$ and $T=4.46$ on average over all models when LogME is convergent on the Sun397 dataset. It results in higher time complexity than LogME, as indicated in Table \ref{tab:complexity_compare}. We further verify this by implementing LogME with $T=1$. As shown in Table \ref{tab:LogME_iter}, EMMS spends more time in calculating the transferability than LogME (T=1) on all datasets. However, LogME performs much worse than EMMS because it does not converge when $T=1$.

\begingroup
\renewcommand{\arraystretch}{1.1}
\setlength{\tabcolsep}{4pt}
\begin{table}[H]
\begin{center}
\caption{The comparison of computational complexity between LogME, EMMS(one), and EMMS in image classification. We denote model feature $X \in R^{N \times D_1}$ and F-labels $Z\in R^{N\times D_2\times K}$ with $N \approx 10^4$, $D_1 \approx 10^3$, $D_2=1024$, $K=3$, and $C\approx 10^2$. Moreover, $T\approx 3$ denotes the iteration number of LogME. Moreover, LogME(F) denotes LogME with F-Label.}
\label{tab:complexity_compare}

\scalebox{0.63}{
\begin{tabular}{c c c c }
\hline\noalign{\smallskip}
& Complexity & Simplified Complexity & Vector Compuration  \\
\noalign{\smallskip}
\hline
\noalign{\smallskip}
\hline
LogME& $3TCD_1^2 + (2T+1)NCD_1 + D_1^3 + ND_1^2$ & $3TCD_1^2 + ND_1^2+ND_1C(2T+1)$ & \XSolid \\
LogME(F)& $3TD_2D_1^2 + (2T+1)NCD_1 + D_1^3 + ND_1^2$ & $3TD_2D_1^2 + ND_1^2+ND_1D_2(2T+1)$ & \XSolid \\
EMMS(One)  & $CD_1^2 + NCD_1 + D_1^3 + ND_1^2$ & $ND_1^2$ & \Checkmark \\
EMMS  & $ND_1^2 +2ND_1D_2 + D_1^3 + D_1^2D_2 + (K^2+K)(ND_2) + K^3 + K^2 +K\log K$ & $ND_1^2 + 2ND_1D_2$ & \Checkmark \\

\hline
\end{tabular}
}
\end{center}
\end{table}
\endgroup

\begingroup
\renewcommand{\arraystretch}{1.1}
\setlength{\tabcolsep}{4pt}
\begin{table}[H]
\begin{center}
\caption{The comparison between LogME and EMMS. The results are obtained on image classification regarding $\tau_w$. LogME ($T=1$) indicates that the inner loop of LogME only performs once. }
\label{tab:LogME_iter}

\scalebox{0.8}{
\begin{tabular}{c c c c c c c c c c c c }
\hline\noalign{\smallskip}
 & Aircraft & Caltech & Cars & CF-10 & CF-100 & DTD & Flowers & Food & Pets & SUN & VOC\\
\noalign{\smallskip}
\hline
\noalign{\smallskip}
&\multicolumn{10}{c}{Weighted Kendall's tau $\tau_w$} \\
\noalign{\smallskip}
\hline
LogME ($T=1$)& 0.378 & 0.341 & -0.408 & 0.645 & 0.727 & 0.112 & -0.074 & 0.561 & 0.528 & 0.259 & -0.04  \\
LogME  & 0.299 & 0.382 & 0.633 & \textbf{0.741} & 0.727 & 0.569 & 0.512 & 0.580 & 0.528 & 0.619 & 0.591  \\
 EMMS(One)  & 0.412 & 0.444 & 0.565 & 0.740 & 0.736 & \textbf{0.621} & \textbf{0.562} & 0.579 & \textbf{0.740} & 0.592 & \textbf{0.730} \\
  EMMS  & \textbf{0.481} & \textbf{0.444} & \textbf{0.706} & 0.718 & \textbf{0.745} & \textbf{0.621} & \textbf{0.562} & \textbf{0.673} & \textbf{0.740} & \textbf{0.619} & \textbf{0.730}   \\
\hline
\noalign{\smallskip}
&\multicolumn{10}{c}{Wall-Clock Time (s)} \\
\noalign{\smallskip}
\hline
LogME ($T=1$)& 4.45 & 4.72 & 8.18 & 34.81 & 40.15 & 3.65 & 5.13 & 53.7 & 4.59 & 31.66 & 6.03 \\
LogME  & 8.93 & 10.89 & 30.28 & 53.07 & 62.13 & 4.78 & 9.27 & 104.92 & 6.28 & 425.43 & 7.42  \\
 EMMS(One)  & \textbf{4.12} & \textbf{4.45} & \textbf{8.07} & \textbf{19.45} & \textbf{26.18} & \textbf{2.65} & \textbf{4.03} & \textbf{39.72} & \textbf{3.50} & \textbf{24.84} & \textbf{4.07} \\
 EMMS  & 21.31 & 17.23 & 28.06 & 154.61 & 182.11 & 13.87 & 15.95 & 265.99 & 19.73 & 63.86 & 16.63  \\

\hline
\end{tabular}
}
\end{center}
\end{table}
\endgroup

\textbf{Comparison with variants of existing methods.} To further validate the efficacy of EMMS, we compare it with TransRate using F-Labels on image classification. To this end, we estimate the mutual information of the model feature and F-label following TransRate. Specifically, denote model feature $X \in R^{N \times D_1}$, and the F-Label $Z_k \in R^{N \times D_2}$, we estimate the mutual information of $X$ and $Z_k$ after the discretization operation for each dimension of $D_2$ separately and then take average to obtain the final score. 

Moreover, we implement two baselines based on TransRate. When $K=1$, we instantiate the F-Label as the CLIP embedding. When $K=3$, we instantiate the F-Labels as the embedding collection extracted from the CLIP, BERT, and GPT-2. In this case, the final score is averaged over three F-Labels. The results are shown in Table 
\ref{tab:TransRate_Flabel}, where we can see that our EMMS consistently outperforms TransRate with F-Labels (both K=1 and K=3). 


\begingroup
\renewcommand{\arraystretch}{1.1}
\setlength{\tabcolsep}{4pt}
\begin{table}[H]
\begin{center}
\caption{The comparison between TransRate and EMMS. The results are obtained on image classification regarding $\tau_w$. TransRate ($K$) indicates that the number of foundation models used.}
\label{tab:TransRate_Flabel}

\scalebox{0.8}{
\begin{tabular}{c c c c c c c c c c c c c}
\hline\noalign{\smallskip}
 & Aircraft & Caltech & Cars & CF-10 & CF-100 & DTD & Flowers & Food & Pets & SUN & VOC& Avg.\\
\noalign{\smallskip}
\hline
\noalign{\smallskip}
&\multicolumn{11}{c}{Weighted Kendall's tau $\tau_w$} \\
\noalign{\smallskip}
\hline
TransRate(K=1)  & 0.297 & 0.440 & 0.682 & 0.655 & 0.501 & 0.533 & 0.548 & 0.537 & 0.736 & 0.533& 0.666 & 0.557  \\
TransRate(K=3)  & 0.295 & 0.441 & 0.682 & 0.523 & 0.501& 0.542& 0.548& 0.539& 0.730& 0.533& 0.679& 0.546 \\
EMMS  & \textbf{0.481} & \textbf{0.444} & \textbf{0.706} & \textbf{0.718} & \textbf{0.745} & \textbf{0.621} & \textbf{0.562} & \textbf{0.673} & \textbf{0.740} & \textbf{0.619} & \textbf{0.730} & \textbf{0.639}  \\

\hline
\end{tabular}
}
\end{center}
\end{table}
\endgroup

\clearpage


\end{document}